\newtheorem{example}{Example}
\newtheorem{theorem}{Theorem}
\newtheorem{lemma}[theorem]{Lemma}
\title{On the Approximability of Weighted Model Integration on DNF Structures}
\author{ Ralph Abboud \and
	{\.I}smail {\.I}lkan Ceylan\And
	Radoslav Dimitrov
	\affiliations
	Department of Computer Science, University of Oxford\\
	\emails
	\{firstname.lastname\}@cs.ox.ac.uk
}
\begin{document}

\maketitle

\begin{abstract}

\emph{Weighted model counting} (\WMC) consists of computing the weighted sum of all satisfying assignments of a propositional formula. \WMC is well-known to be \sharpP-hard for exact solving, but admits a fully polynomial randomized approximation scheme (FPRAS) when restricted to DNF structures. In this work, we study \emph{weighted model integration}, a generalization of weighted model counting which involves real variables in addition to propositional variables, and pose the following question: Does weighted model integration on \DNF structures admit an FPRAS?  
Building on classical results from approximate volume computation and approximate weighted model counting, we show that weighted model integration on \DNF structures can indeed be approximated for a class of weight functions. Our approximation algorithm is based on three subroutines, each of which can be a \emph{weak} (i.e., approximate), or a \emph{strong} (i.e., exact) oracle, and in all cases, comes along with accuracy guarantees. We experimentally verify our approach over randomly generated \DNF instances of varying sizes, and show that our algorithm scales to large problem instances, involving up to 1K variables, which are currently out of reach for existing, general-purpose weighted model integration solvers.	
\end{abstract}

\section{Introduction}
\emph{Weighted model counting}~(\WMC) has been introduced as a unifying approach for encoding probabilistic inference problems that arise in various formalisms.
Informally, given a propositional formula, and a weight function that assigns every truth assignment a weight, \WMC amounts to computing the weighted sum of all the satisfying assignments \cite{Gomes09}.
Many probabilistic inference problems in \emph{probabilistic graphical models}~\cite{Koller-PGM}, \emph{probabilistic planning}~\cite{Domshlak07}, \emph{probabilistic logic programming}~\cite{ProbLog}, \emph{probabilistic databases}~\cite{Suciu-PDBs}, and \emph{probabilistic knowledge bases}~\cite{BCL-AAAI17} can be reduced to a form of \WMC. 

Despite its wide applicability, \WMC is limited to discrete domains and thus cannot be applied to domains involving real variables,  and this motivated the study of \emph{weighted model integration}~(\WMI)~\cite{Belle15}, as a generalization of \WMC. 

Building on the foundations of \emph{satisfiability modulo theories~(SMT)}~\cite{BSST09}, \WMI can capture \emph{hybrid domains} with mixtures of Boolean and continuous variables. 
Briefly, the input to \WMI is a \emph{hybrid} propositional formula that additionally involves arithmetic constraints (e.g., linear constraints over real, or integer variables), and a weight function that defines a \emph{density} for every truth assignment of the formula. \WMI is then the task of computing the sum of \emph{integrals} over the densities of all the satisfying assignments of the given hybrid propositional formula \cite{Belle15,MorettinAIJ19}.

The standard formulation of \WMI assumes a formula in \emph{conjunctive normal form}~(\CNF) as an input, and to date, there is no study of \WMI which is specifically tailored to formulas in \emph{disjunctive normal form}~(\DNF). This is surprising, as both variants are widely investigated for \WMC. We write  $\WMI(\CNF)$ and $\WMI(\DNF)$ in the sequel to distinguish between these cases. These problems are clearly $\sharpP$-hard for \emph{exact solving}, as are their respective special cases $\WMC(\CNF)$ and $\WMC(\DNF)$ \cite{Valiant79}.
For \emph{approximate solving}, however, there is a strong contrast in computational complexity between variants of weighted model counting problems: $\WMC(\DNF)$ has a \emph{fully polynomial randomized algorithm scheme (FPRAS)}~\cite{Karp89}, producing polynomial-time approximations with guarantees, whereas $\WMC(\CNF)$ is $\NP$-hard to approximate \cite{Roth96}. 
The latter polynomial-time inapproximability result immediately propagates to $\WMI(\CNF)$, while the approximability status of $\WMI(\DNF)$ remains \emph{open}. In this paper, we pose the following question: Does $\WMI(\DNF)$ admit an FPRAS?

We answer this question in the affirmative, and provide a polynomial-time algorithm for $\WMI(\DNF)$ with probabilistic accuracy guarantees. 
The intuition behind our result is based on two observations. First, the special case of $\WMI(\DNF)$ without any arithmetic constraints corresponds to $\WMC(\DNF)$ which has an FPRAS~\cite{Karp89}. Second, the special case of $\WMI(\DNF)$ with constant weight functions, and without any Booleans, corresponds to computing the volume of \emph{unions of convex bodies}, which also has an FPRAS \cite{Bringmann}.
Our result builds on these results, and extends them, by allowing extra constructs essential for \WMI, while preserving the approximation guarantees. Our main contributions can be summarized as follows:
\begin{enumerate}[--]
	\item We propose an efficient approximation algorithm for $\WMI(\DNF)$, called \approxwmi, extending  the algorithm given in \cite{Bringmann}. 

	\item We prove that \approxwmi is an FPRAS provided that the weight functions are \emph{concave}, and can be factorized into products of weights of literals. We provide asymptotic bounds for the running time of the algorithm. 
	\item We extend \approxwmi to the case where the products of weights assumption is relaxed, and provide asymptotic bounds for the running time of the algorithm. 
	\item We experimentally verify our approach, using a strong oracle for computing the volume of a body.  Our experiments suggest that \approxwmi solves large problem instances, including up to 1K variables, which are out of reach for any existing, general-purpose \WMI solver.
\end{enumerate}
The full proofs of our results can be found in the appendix of this paper.

\section{Preliminaries}
We briefly recall \emph{propositional logic}, \emph{linear real arithmetic} and \emph{weighted model integration}, where we also settle the notation and assumptions used throughout the paper.

\subsection{Logic and Linear Real Arithmetic}
Let us denote by $\mathbb{R}$ the real domain, and by $\mathbb{B}$ the Boolean domain $\{0,1\}$.
Let $\Xbf$ be a set of $n$ real variables, and $\Vbf$ be a set of $m$ Boolean variables (or atoms). An \emph{LRA atom} is of the form 
\begin{align*}
\sum_i c_i x_i \bowtie c,
\end{align*} where $c$ and $c_i$ are rational values/constants, ${x_i\in \Xbf}$, and ${\bowtie \in \{<, \leq, >,\geq, =, \neq\}}$ with their usual semantics. We write $\atoms(\Xbf,\Vbf)$ to denote the set of atoms over $\Xbf \cup \Vbf$.  A \emph{literal} is either an atom or its negation. We sometimes write \emph{LRA literal}, or a \emph{Boolean literal}, to distinguish the literals depending on the domain of their corresponding variables.

A \emph{propositional formula $\phi$} over $\atoms(\Xbf,\Vbf)$ is defined as a Boolean combination of literals via the logical connectives $\{\neg, \land, \lor, \to, \leftrightarrow\}$.  If $\Vbf=\emptyset$, we say $\phi$ is an \emph{LRA formula}, and if $\Xbf=\emptyset$ then $\phi$ corresponds to a standard propositional formula defined only over Boolean variables. 

\begin{example} Let us consider the formula $\phi_{ex}$ given as
	\[
	 ((0 \leq x_1 \leq 5) \lor \neg p_1) \land (p_2 \lor \neg (10 \leq x_1 + x_2 \leq 15)),
	\]
which contains 2 Boolean and 2 LRA literals. \hfill \qedboxfull
\end{example}	

Given a propositional formula $\phi$ over $\atoms(\Xbf,\Vbf)$, a \emph{truth assignment} $\tau: \atoms(\Xbf,\Vbf) \mapsto \mathbb{B}$, maps every atom to either $0$ (false), or $1$ (true). A truth assignment $\tau$ \emph{satisfies} a propositional formula $\phi$, denoted $\tau\models \phi$, in the usual sense, where $\models$ is the propositional entailment relation. We  sometimes say $\tau$ \emph{propositionally satisfies} $\phi$ to make the underlying entailment relation explicit. 

Observe that a propositional formula  over $\atoms(\Xbf,\Vbf)$ may have a propositionally satisfying truth assignment but not admit a solution to the LRA constraints, i.e., the relevant LRA constraints define an empty polytope. An assignment $\tau$ is \emph{LRA-satisfiable} if the solution space to the set of linear inequalities induced by the mapping $\tau$ is non-empty. The classical SMT problem over LRA constraints is, for a given a propositional formula $\phi$ over $\atoms(\Xbf,\Vbf)$, to decide whether there exists an assignment $\tau$ such that $\tau\models \phi$  (propositionally satisfiable), and $\tau$ is LRA-satisfiable.

\begin{example}
	Consider again the formula $\phi_{ex}$, and an assignment $\tau$ with ${\tau(p_1)=1}$, ${\tau(p_2)=0}$, ${\tau(0 \leq x_1 \leq 5)=1}$, and ${\tau(10 \leq x_1 + x_2 \leq 15)=0}$. Clearly, $\phi_{ex}$ is propositionally satisfiable as witnessed by $\tau$; it is also LRA satisfiable, e.g., for values $x_1=2$, and $x_2=30$, the assignment $\tau$ is LRA satisfiable.
	By contrast, the formula 
	\[
	\phi_{ex'} = (1 \leq x_1 + x_2 \leq 4) \land \neg p_1 \land (x_1 \leq -2) \land (x_2 \leq 2)
	\] 
	is trivially propositionally satisfiable, but is not LRA satisfiable, since $(x_1 \leq -2) \land (x_2 \leq 2)$ and $(1 \leq x_1 + x_2 \leq 4)$ cannot be satisfied simultaneously.
\hfill \qedboxfull
\end{example}	

We recall the fragments of propositional logic. A \emph{conjunctive clause} is a conjunction of literals, and a \emph{disjunctive clause} is a disjunction of literals. A propositional formula $\phi$ is in \emph{conjunctive normal form} (\CNF) if it is a conjunction of disjunctive clauses, and it is in \emph{disjunctive normal form} (\DNF) if it is a disjunction of conjunctive clauses. A clause has \emph{width} $k$ if it has exactly $k$ literals. We say that a \DNF (resp., \CNF) has width $k$ if it contains clauses of width at most $k$. 

\subsection{Weighted Model Integration}
\label{ssec:wmi_def}

Let $\Xbf$ be a set of $n$ real variables, and $\Vbf$ a set of $m$ Boolean variables. We consider a weight function ${w: (\Rbb^n \times \mathbb{B}^m) \mapsto \Rbb^+}$,  and propositional formulas $\phi(\Xbf,\Vbf)$ such that ${\phi: (\Rbb^n \times \mathbb{B}^m) \mapsto \mathbb{B}}$. 

The \emph{weighted model integral}  is defined as:
\begin{align}
\label{wmi}
\WMI(\phi, w \mid \Xbf, \Vbf) = \sum_{\vbf} \int\limits_{\xbf_{\phi}} w(\xbf,\vbf) d \xbf 
\end{align}
where $\vbf$ is an assignment to Boolean variables $\Vbf$, and $\xbf_{\phi}$ denotes the real valuations of $\Xbf$ satisfying $\phi(\xbf,\vbf)$. 

Note that the weight function depends both on Boolean and real variables. It is common to employ a simplifying assumption to $w(\xbf, \vbf)$ (e.g.,~Section \ref{ssec:wmi_def} of \cite{Martires19}): 
\begin{align}
\label{fact}
w(\xbf, \vbf) = w_x(\xbf) \prod_{i=1}^m w_b(p_i),
\end{align}
where $w_x: \mathbb{R}^n \rightarrow \mathbb{R}^+$ and $w_b: \mathbb{B} \rightarrow ]0,1[$ are functions, and $w_b(p_i)$ returns the probability of Boolean literal $p_i$. 
This implies that the weight function from (\ref{wmi}) can be factorized into a product of a real variable weight function $w_x$ and a product of individual Boolean literal weights as in  (\ref{fact}), and we refer to this as the \emph{factorization assumption}.

\begin{example}
Consider the formula $\phi_{\text{\DNF}}:$
\[
(p_1  \land (0 \leq x_1 \leq 5) \land \neg p_2) \lor (p_2 \land \neg (2 \leq x_1 \leq 4)).
\]
Let $0 \leq x_1 \leq 10$, and $w(\xbf, \vbf) = x_1 \cdot w_b(p_1) \cdot w_b(p_2)$, with $w_b(p_1) = 0.6$, and $w_b(p_2) = 0.1$. Hence, we obtain:
\begin{align*}
\begin{split}
\WMI(\phi_{\text{\DNF}}) =&~0.6\cdot(1-0.1)\cdot\int_0^5 x_1~dx_1~+~\\ &~0.1\cdot\Big(\int_0^2 x_1~dx_1 + \int_4^{10} x_1~dx_1\Big) = 11.15
\end{split}
\end{align*}

 \hfill \qedboxfull
\end{example}

Weighted model integration is defined on fragments of propositional logic in the obvious way, i.e., \WMI over \DNF formulas (resp. \CNF formulas) is the weighted model integration problem where the class of input formulas is restricted to formulas in \DNF (resp., \CNF).  We write $\WMI(\DNF)$ (resp., $\WMI(\CNF)$) to denote the specific problem.  

Finally, weighted model counting can be viewed as a special case of \WMI where $\phi$ is restricted to Boolean variables. Formally, the \emph{weighted model count} (\WMC) of $\phi$ is given by ${\sum_{\tau \models \phi} w(\tau)}$, where $w: \mathbb{B}^m \mapsto \Rbb^+$ is a \emph{weight function}.

\subsection{Approximations with Guarantees}
Model counting problems are \sharpP-hard to solve exactly, and thus are intractable for exact computation. As a result, techniques for efficient \emph{approximations} to model counting have been devised, a special class of which being \emph{fully polynomial randomized approximation schemes (FPRAS)}. Given a target error $0 < \epsilon < 1$ and confidence ${0< \delta < 1}$, an FPRAS computes an approximation $\hat{\mu}$ of the actual solution $\mu$, in polynomial time w.r.t the input, $\frac{1}{\epsilon}$, and $\frac{1}{\delta}$, such that %
\begin{align*}
{\Pr\big(\mu (1 - \epsilon) \leq \hat{\mu} \leq \mu (1 + \epsilon)\big) \geq 1 - \delta}.
\end{align*}

For \WMC on \DNF structures, the Karp, Luby, and Madras \shortcite{Karp89} algorithm (KLM), a special case of the Linear-Time Coverage (LTC) \cite{LubyThesis} algorithm, is an FPRAS. For a \DNF $\phi$ with $n$ variables and $m$ clauses, KLM runs ${T = 8(1+\epsilon) m \log({\frac{2}{\delta}})\frac{1}{\epsilon^2}}$ trials to compute a successful trial count $N$. At every trial, KLM performs the following: 
\begin{enumerate}[leftmargin=2\parindent]
\item If no current sample assignment $\tau$ exists, then a random clause $c_i$ is selected with probability $\Pr(c_i)/ \sum_{j=1}^m \Pr(c_j)$, where $\Pr(c_i) = \prod_{v \in c_i} w_b(v)$ . Afterwards, $\tau$ is sampled uniformly from the set of satisfying assignments for $c_i$. 
\item  Another clause $c_k$ (which could be identical to $c_i$) is \emph{uniformly} randomly sampled, and $\tau$ is checked against $c_k$. If $\tau \models c_k$, $N$ is incremented and  $\tau$ is re-sampled. Otherwise, $\tau$ is re-used in the next trial.
\end{enumerate}

KLM returns $T\sum_{j=1}^m p(c_j) / mN$ as an estimate for $\WMC(\DNF)$. Since assignment checking runs in $O(n)$, KLM thus runs in time $O\big(nm \epsilon^{-2}\log(\frac{1}{\delta})\big)$. 

For volume computation of convex bodies, Lov{\`a}sz and Vempala \shortcite{LovaszVol06} provide an FPRAS based on Multi-Phase Monte Carlo. In $n-$dimensional space, it uses $O^*(n)$ phases, where the asterisk denotes suppressed logarithmic factors, such that at every phase, random walk algorithms such as \emph{hit-and-run} \cite{ChenHitAndRun} are called over convex bodies at consecutive phases to approximate the ratios between their volumes. Since hit-and-run runs in $O^*(n^3)$, the overall volume computation runs in time $O^*(n^4)$.

\section{Weighted Model Integration over DNFs}
In this section, we propose \approxwmi, an algorithm for $\WMI(\DNF)$, and prove that it is an FPRAS. \approxwmi builds on work for approximately computing the volume of unions of convex bodies \cite{Bringmann}, which we introduce next.

\subsection{The Volume of Union of Convex Bodies}
\approxunion is an FPRAS for computing the volume of the union of convex bodies \cite{Bringmann}. More formally, given $k$ convex bodies $B_1, ..., B_k$, \approxunion returns an approximation of the volume of  $\bigcup_{i=1}^{k} B_i$, denoted $\vol (\bigcup_{i=1}^k B_i)$. This algorithm is based on the LTC algorithm, and extends it with approximate (``weak'') oracles in order to tackle the underlying volume computations.

\approxunion first computes the volume of every convex body approximately, with multiplicative error $\epsilon_v$, using an oracle \volumequery. 
Following this, it repeats the following procedure for $T$ trials to compute a successful trial count $N$. 
\begin{enumerate}[leftmargin=2\parindent]
\item If no sample point $p$ exists, \approxunion samples a body $B_i$ with probability  $\vol (B_i) / \sum_{j=1}^k \vol (B_j)$, and then approximately uniformly samples $p$ from this body using an oracle \samplequery with error $\epsilon_S$.
\item It then checks whether $p$ belongs to another uniformly chosen body $B'$ using another approximate oracle, \pointquery, with error $\epsilon_p$. If  $p \in B'$, a new $B_i$ and $p$ are sampled, and $N$ is incremented. Otherwise, $p$ is re-used in the next trial. 
\end{enumerate}

Following $T$ calls to \pointquery, the algorithm returns $T\sum_{j=1}^k \vol (B_j) / kN$ as an estimate of $\vol (\bigcup_{i=1}^k B_i)$. 

\approxunion is an FPRAS for the volume computation of a union of convex bodies under certain conditions, as stated in Theorem 2 of \cite{Bringmann}, and these conditions are satisfied with closed-form bounds from Lemma 3 of that work. All in all, the following result holds:
\begin{theorem}[Theorem 2 and Lemma 3, \cite{Bringmann}]
\label{thm: approxunion}
\approxunion relative to oracles \volumequery, \samplequery, \pointquery  with errors $\epsilon_V$, $\epsilon_S$, and $\epsilon_P$ respectively, is an FPRAS for $\vol (\bigcup_{i=1}^{k} B_i)$  with error $\epsilon$ and confidence $\frac{1}{4}$ using 
$T = \frac{24\ln(2)(1+\tilde{\epsilon})k}{\tilde{\epsilon}^2 - 8(\tilde{C}-1)k}$ iterations, with 
$\tilde{\epsilon} = \frac{\epsilon - \epsilon_V}{1 + \epsilon_V}$ and ${\tilde{C} = \frac{(1 + \epsilon_S)(1 + \epsilon_V)(1 + k\epsilon_P)}{(1 - \epsilon_V)(1 - \epsilon_P)}}$, for
$\epsilon_V,\, {\epsilon_S \leq \frac{\epsilon^2}{47k}}$, and $\epsilon_P \leq \frac{\epsilon^2}{47k^2}$.
\end{theorem}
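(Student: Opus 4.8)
The plan is to obtain this statement directly by combining the two cited ingredients from \cite{Bringmann}: their Theorem 2, which establishes that \approxunion is an FPRAS under abstract hypotheses relating the oracle errors $\epsilon_V, \epsilon_S, \epsilon_P$ to the target error $\epsilon$, and their Lemma 3, which supplies the closed-form bounds on those errors (here $\epsilon_V,\epsilon_S \le \epsilon^2/(47k)$ and $\epsilon_P \le \epsilon^2/(47k^2)$) that make the hypotheses hold. Concretely, I would first check that the three oracles \volumequery, \samplequery, \pointquery meet the abstract interface required by Theorem 2 — multiplicative volume error, total-variation sampling error, and one-sided membership error, respectively — and then instantiate Lemma 3 to read off the admissible error regime together with the iteration count $T$.

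To reconstruct the argument rather than merely cite it, I would start from the exact-oracle Linear-Time Coverage estimator $\hat\mu = T\sum_j \vol(B_j)/(kN)$ and recall the Karp--Luby--Madras analysis: in the idealized setting each trial is a Bernoulli success whose probability is governed by the coverage $c(p)$ of the sampled point $p$, the estimator is unbiased, and, since the cover ratio $\vol(\bigcup_i B_i)/\sum_j \vol(B_j) \ge 1/k$, its relative variance is $O(k)$, so $T = O(k\epsilon^{-2}\log(1/\delta))$ trials suffice for an $(\epsilon,\delta)$-guarantee via a Chernoff bound on $N$. The core of the proof is then to track how each weak oracle perturbs this clean picture: \volumequery distorts both the body-sampling distribution and the prefactor $\sum_j \vol(B_j)$ multiplicatively by $1\pm\epsilon_V$; \samplequery shifts the law of $p$ in total variation by at most $\epsilon_S$; and \pointquery flips membership outcomes with probability at most $\epsilon_P$. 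Propagating these through the per-trial success probability yields a multiplicative distortion bounded by $\tilde C = \frac{(1+\epsilon_S)(1+\epsilon_V)(1+k\epsilon_P)}{(1-\epsilon_V)(1-\epsilon_P)}$, while absorbing the volume error into the target gives the effective accuracy $\tilde\epsilon = (\epsilon-\epsilon_V)/(1+\epsilon_V)$.

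With these in hand, the final step is a concentration argument: bound $\mathbb{E}[\hat\mu]$ within $(1\pm\epsilon)\vol(\bigcup_i B_i)$ using the distortion $\tilde C$, and control the fluctuation of $N$ with a Chernoff/Hoeffding bound to secure success probability $1-\frac14 = \frac34$ once $T = \frac{24\ln(2)(1+\tilde\epsilon)k}{\tilde\epsilon^2 - 8(\tilde C-1)k}$. I expect the main obstacle to be the error-budget bookkeeping: one must show that the denominator $\tilde\epsilon^2 - 8(\tilde C-1)k$ stays strictly positive — equivalently, that the combined distortion $\tilde C-1$ is small relative to $\tilde\epsilon^2/k$ — precisely under the stated bounds $\epsilon_V,\epsilon_S \le \epsilon^2/(47k)$ and $\epsilon_P\le\epsilon^2/(47k^2)$, so that $T$ is both well-defined and polynomial in $k$ and $1/\epsilon$. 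Verifying this positivity, and confirming that the three error sources do not compound beyond $\tilde C$, is the delicate part; the surrounding unbiasedness and concentration steps are routine given the classical LTC analysis.
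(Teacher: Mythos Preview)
The paper does not prove this theorem at all: it is stated purely as a citation of Theorem~2 and Lemma~3 of \cite{Bringmann}, with no accompanying argument. Your first paragraph---combine Bringmann's Theorem~2 (the abstract FPRAS guarantee) with Bringmann's Lemma~3 (the closed-form error bounds)---is therefore exactly the paper's ``proof,'' namely attribution. Your subsequent two paragraphs, which sketch a reconstruction of the Karp--Luby--Madras analysis with weak-oracle perturbations and the positivity check on $\tilde\epsilon^2 - 8(\tilde C-1)k$, go strictly beyond what the paper offers; that reconstruction is reasonable and broadly faithful to the Bringmann argument, but there is nothing in the present paper to compare it against.
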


Furthermore, when this theorem holds, T is $O(\frac{k}{\epsilon^2})$. \approxunion generalizes LTC to allow errors within sampling, membership checking, and volume computation, so long as these can be made arbitrarily small, and computes unions of continuous sets, as opposed to only discrete sets. 

\approxunion does \emph{not} allow for confidence parameters in the oracles, but  it can be extended to allow for FPRAS oracles having confidence $\delta$ %
using standard tools of probability.

\begin{lemma}
\label{lem:conf}
\approxunion relative to FPRAS oracles  \volumequery, \samplequery, and \pointquery  with errors $\epsilon_V$, $\epsilon_S$, $\epsilon_P$ and confidence values $\delta_V$, $\delta_S$, $\delta_P$, respectively, is an FPRAS with error $\epsilon$ and confidence $\delta$ for $\vol (\bigcup_{i=1}^{k} B_i)$ using $T = \frac{8\ln(\frac{8}{\delta})(1+\tilde{\epsilon})k}{\tilde{\epsilon}^2 - 8(\tilde{C}-1)k}$	iterations, for
	\begin{enumerate}[leftmargin=2\parindent]
		\item $\epsilon_V, \epsilon_S \leq \frac{\epsilon^2}{47k}$, $\epsilon_P \leq \frac{\epsilon^2}{47k^2}$, and 
		\item $\delta_V \leq \frac{\delta}{4k}$, $\delta_S + \delta_P \leq \frac{\delta}{2276 \ln(\frac{8}{\delta}) \frac{k}{\epsilon^2}}$.
	\end{enumerate}
\end{lemma}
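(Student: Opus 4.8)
The plan is to reduce the statement to Theorem~\ref{thm: approxunion} by isolating the randomness of the FPRAS oracles from the randomness of the coverage estimator, and then to distribute the total failure budget $\delta$ across the two sources via a union bound. Concretely, I would define the \emph{good event} $G$ to be the event that \emph{every} call to \volumequery, \samplequery, and \pointquery returns a value (or samples from a distribution) within its stated error bound $\epsilon_V$, $\epsilon_S$, $\epsilon_P$. Conditioned on $G$, each randomized oracle is indistinguishable from the corresponding deterministic approximate oracle of Theorem~\ref{thm: approxunion}, so the whole run of \approxunion behaves as in that setting. The target is then the bound $\Pr(\text{estimate is off by more than }\epsilon) \le \Pr(\bar G) + \Pr(\text{estimate off}\mid G)$, where the first term controls oracle failures and the second is governed by Theorem~\ref{thm: approxunion}.

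First I would bound $\Pr(\bar G)$ by counting oracle calls. \volumequery is invoked once per body, i.e.\ $k$ times; \pointquery is invoked once per trial, i.e.\ $T$ times; and \samplequery is invoked once at the start and once after each successful trial, hence at most $T+1$ times. A union bound gives $\Pr(\bar G) \le k\,\delta_V + T\,\delta_P + (T+1)\,\delta_S$. Substituting $\delta_V \le \frac{\delta}{4k}$ bounds the volume contribution by $\frac{\delta}{4}$. For the remaining terms I would use the fact, noted after Theorem~\ref{thm: approxunion}, that under the error conditions in item~1 the denominator $\tilde\epsilon^2 - 8(\tilde C-1)k$ is bounded below by a constant multiple of $\epsilon^2$, so that $T = O\!\big(\ln(\tfrac{8}{\delta})\,\tfrac{k}{\epsilon^2}\big)$ with an explicit constant; the value $2276$ in the condition $\delta_S + \delta_P \le \frac{\delta}{2276\ln(8/\delta)k/\epsilon^2}$ is precisely what is needed so that $(T+1)(\delta_S+\delta_P) \le \frac{\delta}{2}$. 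Together these yield $\Pr(\bar G) \le \frac{3\delta}{4}$.

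Next I would control the conditional term $\Pr(\text{estimate off}\mid G)$ using Theorem~\ref{thm: approxunion}. The only change from that theorem is the coefficient of $T$: the original $24\ln(2) = 8\ln(8)$ corresponds to a coverage-estimator failure probability of $\tfrac14$, and replacing $\ln(8)$ by $\ln(\tfrac{8}{\delta})$---exactly the coefficient in the new $T$---drives this failure probability down to $\tfrac{\delta}{4}$, since the underlying Chernoff/coverage bound scales the number of trials linearly in $\ln(1/\delta_{\mathrm{core}})$. Conditioned on $G$ the estimator is thus within $\epsilon$ with probability at least $1-\tfrac{\delta}{4}$. Combining, $\Pr(\text{estimate off}) \le \tfrac{3\delta}{4} + \tfrac{\delta}{4} = \delta$, and since $T$ is still $O(\ln(\tfrac{8}{\delta})k/\epsilon^2)$ the overall procedure remains polynomial, establishing the FPRAS claim.

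The step I expect to be the main obstacle is the conditioning argument in the previous paragraph. Because the oracle outputs drive the body- and point-sampling decisions, the good event $G$ and the coverage estimator are \emph{not} independent, so one cannot simply multiply the two failure probabilities. The careful move is a coupling: on $G$ one treats each within-bound oracle answer as an (adversarially chosen but admissible) answer of the deterministic oracle of Theorem~\ref{thm: approxunion}, and one must verify that the concentration analysis underlying that theorem holds for \emph{any} admissible oracle behavior, so that conditioning on $G$ does not corrupt the trial distribution used in the coverage bound. The secondary technical chore is pinning down the explicit constant $2276$, which requires the exact lower bound on $\tilde\epsilon^2 - 8(\tilde C-1)k$ together with the per-trial oracle-call count.
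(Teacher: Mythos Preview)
Your proposal is correct and follows essentially the same route as the paper: split the failure budget via a union bound into oracle failures (at most $\tfrac{3\delta}{4}$, obtained by counting $k$ \volumequery\ calls and at most $T$ each of \samplequery/\pointquery\ and using the explicit upper bound $T < 1138\ln(\tfrac{8}{\delta})\tfrac{k}{\epsilon^2}$ inherited from Lemma~3 of \cite{Bringmann}) plus the coverage-estimator failure (driven to $\tfrac{\delta}{4}$ by replacing $24\ln 2$ with $8\ln(\tfrac{8}{\delta})$ in $T$). The only noteworthy difference is that you explicitly flag the conditioning/coupling issue between $G$ and the estimator, whereas the paper sidesteps it with the blunt ``worst-case assumption that a single failure of any oracle, or a failure of the sampling procedure, implies the failure of \approxunion,'' which amounts to the same union-bound inequality without discussing the dependence.
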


\subsection{Approximating  the WMI over DNFs}
\label{ssec:indep}

\begin{algorithm}[tb]
	\caption{\approxwmi for $\WMI(\DNF)$.}
	\label{alg:wmi}
	\textbf{Input}: $\Xbf$: a set of $n$ real variables; $\Vbf$: a set of $m$ Boolean variables;  $\phi$: a \DNF  consisting of $k$ clauses $c_i$; $w$: a concave factorized weight function. \\
	\textbf{Parameters}: $\epsilon$: error, $\delta$: confidence. \\
	\textbf{Output}: $\WMI(\phi, w \mid \Xbf, \Vbf)$.
	\begin{algorithmic}[1] %
		\State $T \gets \frac{8\ln(\frac{8}{\delta})(1+\tilde{\epsilon})k}{\tilde{\epsilon}^2 - 8(\tilde{C}-1)k}$ \Comment $T$ is $O(\frac{k}{\epsilon^2} \ln(\frac{1}{\delta}))$  \label{line:1}
		\State $\delta_V \gets \frac{\delta}{4k}$, $\delta_S \gets \frac{\delta}{2276 \ln(\frac{8}{\delta}) \frac{k}{\epsilon^2}}$ \Comment Confidence parameters  \label{line:2}
		\For {$a \gets 1$ to $k$}   \label{line:3}
		\State $U_i \gets \clauseweight(c_i, w, \Xbf,\Vbf)[\frac{\epsilon^2}{47k},~\delta_V]$  \label{line:4}
		\EndFor 
		\State $U \gets \sum_{i=1}^m U_i$ \Comment{Sampling trials}
		\State time $ \gets 0$,~~ $N_T \gets 0$
		\While{$\text{time} < T$}  \label{line:7}
		\State Randomly select $c_i$, $ i \in {1, ..., k}$ with probability $\frac{U_i}{U}$   \label{line:8}
		\State 
		$p_\text{Bool}, p_\text{Real} \gets \sample(c_i, w, \Xbf, \Vbf)[\frac{\epsilon^2}{47k}, ~\delta_S]$    \label{line:9}
		
		\State $c_\text{sat} \gets \emph{false}$
		\While{$\neg c_\text{sat}$}
		\State Uniformly select $c_j$, where $j \in {1, ..., k}$   \label{line:12}
		\State $\text{time} \gets \text{time} + 1$
		\If{$\text{time} \geq T$} \Comment{If $T$ reached during trial}
		\State \textbf{return }$\sfrac{T U}{k N_T}$ 
		\EndIf
		\If{$\evaluate(c_j, p_\text{Bool}, p_\text{Real})$} \label{line:16}
		\State $c_\text{sat} \gets \emph{true}$, $N_T \gets N_T + 1$  
		\EndIf
		\EndWhile
		\EndWhile  \label{line:17}
		\State \textbf{return } $\sfrac{T U} {k N_T}$ \label{line:18}
	\end{algorithmic}
\end{algorithm}

\begin{algorithm}[tb]
	\caption{Subroutines of \approxwmi as functions \clauseweight, \sample, and \evaluate.}
	\label{alg:functions}
	\label{alg:oracles}
	\begin{algorithmic} %
		\Function{\clauseweight}{$c, w, \Xbf, \Vbf$}[$\epsilon, \delta$]
		\State $w_\text{Bool} \gets \prod_{p \in {c_i}} w_b(p)$ \Comment ($\text{c}_1$)
		\State Introduce a new variable $d$ in $\Xbf$ \Comment ($\text{c}_2$)
		\State $\xbf'_c \gets \xbf_c \land (0 \leq d \leq w_x(x))$
		\State $w_\text{Real} \gets \volume(\xbf'_c)[\epsilon,\delta]$ \Comment ($\text{c}_3$)
		\State \textbf{return } $w_\text{Bool}\cdot w_\text{Real}$ \Comment ($\text{c}_4$)
		\State
		\EndFunction
		\Function{\sample}{$c, w, \Xbf, \Vbf$}[$\epsilon, \delta$]
		\State $p_\text{Bool} \gets$ $b_c$, $\Vbf \setminus b_c$ sampled according to $w_b$ \Comment ($\text{s}_1$)
		\State Introduce a new variable $d$ in $\Xbf$ \Comment ($\text{s}_2$)
		\State $\xbf'_c \gets \xbf_c \land (0 \leq d \leq w_x(x))$
		\State $p_\text{Real}\gets \convexbodysampler(\xbf'_c)[\epsilon, \delta]$ \Comment ($\text{s}_3$)
		\State \textbf{return } $p_\text{Bool}$, $p_\text{Real}$ \Comment ($\text{s}_4$)
		\State 
		\EndFunction
		\Function{\evaluate}{$c$, $p_\text{Bool}$, $p_\text{Real}$}
		\State Compute convex polytope $\xbf_c$ from $c$
		\If{$p_\text{Real} \not\models \xbf_c$}\textbf{ return } \emph{false}  \Comment ($\text{e}_1$) 
		\EndIf
		\If{$p_\text{Bool}\not\models c$}  \textbf{ return } \emph{false} \Comment ($\text{e}_2$)
		\EndIf
		\State \textbf{return } \emph{true}
		\EndFunction
	\end{algorithmic}
\end{algorithm}
We can now introduce the algorithm \approxwmi (Algorithm~\ref{alg:wmi}) for $\WMI(\DNF)$, assuming $w$ is concave, i.e., $\forall x,y \in \mathbb{R}^n$, and  $\lambda \in [0,1]$,  it holds that
\[ 
\lambda w(x) + (1 - \lambda)w(y) \leq w(\lambda x + (1-\lambda)y).
\]
This assumption ensures that the bodies resulting from the application of weight functions on convex polytopes are also convex, which in turn enables the use of volume computation FPRAS algorithms \cite{LovaszVol06,KannanLS97} over these bodies. We also assume that $w$ uses the factorization assumption.

\paragraph{Overview of \approxwmi.}
Given a DNF $\phi$ over a hybrid domain $\Xbf \cup \Vbf$, and a concave weight function $w$ that factorizes, \approxwmi computes an $\epsilon, \delta$ approximation of $\WMI(\phi, w \mid \Xbf, \Vbf)$.
More specifically, \approxwmi extends the oracle functions of \approxunion in order to allow unreliable oracles (with confidence parameter $\delta$),  hybrid domains, and arbitrary factorized concave weight functions over convex bodies.%

The main steps of \approxwmi are as follows: After initializing the parameters (\ref{line:1}-\ref{line:2}), the first step is to compute the weighted model integral of the individual clauses in $\phi$, using the function \clauseweight (\ref{line:3}-\ref{line:4}). 
Then, the algorithm runs $T$ sampling trials to compute a successful trial count $N_T$  (\ref{line:7}-\ref{line:18}), similarly to LTC. 
In a sampling trial, a random clause $c$ is selected with probability proportional to its weight $U_i$, and then a point $p$ is sampled from $c$ according to $w$ using the function \sample (\ref{line:8}-\ref{line:9}). 
Afterwards,  a clause $c'$ (possibly $c$), is uniformly chosen from $\phi$ (\ref{line:12}), and a check is made via the function \evaluate, to verify the membership of $p$ to $c'$, and the estimator $N_T$ is incremented accordingly (\ref{line:16}-\ref{line:17}). 

We now explain the subroutines of \clauseweight, \sample, and \evaluate, given in Algorithm~\ref{alg:oracles}, in detail. 

\paragraph{\clauseweight.} This function returns an $\epsilon, \delta$ approximation of $\WMI(c, w \mid \Xbf, \Vbf)$ from a given conjunct $c$ and weight function $w$ over the domains $\Xbf, \Vbf$.
\clauseweight computes the product of probabilities for all Boolean literals, denoted by $p$, appearing in $c$  ($\text{c}_1$). It then transforms the polytope $\xbf_c$ defined by the LRA constraints in $c$, into ${\xbf'_c \in \mathbb{R}^{n+1}}$, by adding another constraint encoding the weight function, with this operation denoted by $\land$ ($\text{c}_2$). Thus, $\xbf'_c$ is identical to $\xbf_{c}$ across the first $n$ dimensions, with an added dimension $d$ verifying ${0 \leq d \leq w_x(x), x \in \xbf_{c}}$.
\clauseweight approximates the volume of $\xbf'_c$, as a proxy for computing the integral of $w$ over $\xbf_c$, using a convex body volume computation algorithm \cite{LovaszVol06,KannanLS97}, denoted by \volume ($\text{c}_3$). Since $w$ is concave, the added dimension in $\xbf'_c$ maintains the convexity of $\xbf_c$, and $\xbf'_c$ is therefore convex. Finally, \clauseweight returns the product of the steps ($\text{c}_1$) and ($\text{c}_3$) outputs, as its estimate for ${\WMI(c, w \mid \Xbf, \Vbf)}$ ($\text{c}_4$).

\paragraph{\sample.} 
This function samples a point $p$ over the domain $\Xbf \cup \Vbf$ from a given conjunct $c$ and a weight function $w$, as follows: It first samples Boolean assignments $p_\text{Bool}$ satisfying $c$ by setting all Boolean literals appearing in $c$ to their required values and randomly sampling all remaining variables according to $w_b$ ($\text{s}_1$). Then, it computes an analogous transformation from $\xbf_c$ to $\xbf'_c$ as in \clauseweight ($\text{s}_2$). 
Afterwards, \sample  samples a point $p_\text{Real}$ approximately uniformly from $\xbf'_c$, using standard sampling approaches for convex bodies such as hit-and-run \cite{ChenHitAndRun}, denoted by \convexbodysampler ($\text{s}_3$). It then discards the $n+1^{th}$ dimension to yield approximate samples $p_\text{Real}$ from $\xbf_c$ weighted according to $w$. 
Finally, \sample  ($\text{s}_4$) returns the concatenation of the outputs of ($\text{s}_1$) and ($\text{s}_3$) as a sample from $c$. Note that, since the weight factorization makes Boolean variable and real variable weights independent, $p_\text{Bool}$ and $p_\text{Real}$ can be sampled separately, as we outline here.

\paragraph{\evaluate.} 
This function determines the membership of a point $p \in \mathbb{R}^n \times \mathbb{B}^m$ to the body defined by a conjunct $c$. 
Specifically, it checks the membership of a point $p$ to the polytope defined by $c$ in two steps: \evaluate first verifies the Boolean component of $p$ ($\text{e}_1$), and then verifies the real component of $p$ (i.e., that all the LRA constraints of $c$ are satisfied) ($\text{e}_2$). If both conditions are met, then $p$ satisfies $c$. Unlike the earlier two functions, \evaluate is deterministic, and its outputs have no attached uncertainty.

\subsection{\approxwmi is an  FPRAS}
\label{ssec:approxwmi_fpras}
We show the correctness of \approxwmi, and prove that, under the error and confidence settings presented in Algorithm \ref{alg:wmi}, it is an FPRAS for \WMI(\DNF) with concave weight functions $w$, respecting the factorization assumption. 

\begin{theorem}
\label{thm: wmi_indep_fpras}
\approxwmi relative to FPRAS oracles \clauseweight, \sample, and \evaluate having error $\epsilon_V$, $\epsilon_S$, $\epsilon_P$ and confidence $\delta_V$, $\delta_S$, $\delta_P$, respectively, is an FPRAS for $\WMI(\DNF)$ over a concave and factorized weight function $w$,  with error $\epsilon$ and confidence $\delta$ and using $T = \frac{8\ln(\frac{8}{\delta})(1+\tilde{\epsilon})k}{\tilde{\epsilon}^2 - 8(\tilde{C}-1)k}$ iterations, for
\begin{enumerate}[leftmargin=2\parindent]
\item $\epsilon_V, \epsilon_S \leq \frac{\epsilon^2}{47k}$, $\epsilon_P \leq \frac{\epsilon^2}{47k^2}$, and 
\item $\delta_V \leq \frac{\delta}{4k}$, $\delta_S + \delta_P \leq \frac{\delta}{2276 \ln(\frac{8}{\delta}) \frac{k}{\epsilon^2}}$.
\end{enumerate}
\end{theorem}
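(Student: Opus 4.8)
The plan is to exhibit \approxwmi as an instance of \approxunion run against a collection of ``generalized bodies'' that encode the hybrid clause semantics, so that the result follows directly from Lemma~\ref{lem:conf}. First I would fix the measure that makes this precise. Under the factorization assumption, define on $\Rbb^n \times \mathbb{B}^m$ the hybrid measure $\mu(S) = \sum_{\vbf} \int_{\{\xbf \,:\, (\xbf,\vbf) \in S\}} w(\xbf,\vbf)\, d\xbf$. Then $\WMI(\phi, w \mid \Xbf, \Vbf) = \mu(R_\phi)$, where $R_\phi \subseteq \Rbb^n \times \mathbb{B}^m$ is the satisfying region of $\phi$; and since $\phi$ is a \DNF with clauses $c_1,\dots,c_k$, this region decomposes as $R_\phi = \bigcup_{i=1}^k R_{c_i}$. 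Thus $\WMI(\DNF)$ is exactly the $\mu$-measure of a union of $k$ bodies, which is the setting of \approxunion, provided its three oracles \volumequery, \samplequery, \pointquery are correctly instantiated with respect to $\mu$ by \clauseweight, \sample, and \evaluate.

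The central technical step is the lifting construction shared by \clauseweight and \sample. For a clause $c$ with LRA-polytope $\xbf_c$, the added dimension yields $\xbf'_c = \{(\xbf, d) : \xbf \in \xbf_c,\ 0 \leq d \leq w_x(\xbf)\}$. I would establish two facts: (i) because $w_x$ is concave and $\xbf_c$ is a convex polytope, $\xbf'_c$ is convex (its hypograph is the intersection of a convex epigraph-type region with half-spaces), so the volume FPRAS of \cite{LovaszVol06} and the hit-and-run sampler \cite{ChenHitAndRun} apply; and (ii) by Fubini, $\vol(\xbf'_c) = \int_{\xbf_c} w_x(\xbf)\, d\xbf$, so the lifted volume is precisely the real part of $\WMI(c, w)$. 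Combined with the exact Boolean factor $\prod_{p \in c} w_b(p)$ — where free Boolean variables contribute a factor of $1$ since $w_b(p) + w_b(\neg p) = 1$ — \clauseweight returns an $(\epsilon_V,\delta_V)$-approximation of $\mu(R_{c})$. By the independence of the Boolean and real parts under factorization, \sample produces an $(\epsilon_S,\delta_S)$-approximately $\mu$-uniform point of $R_c$ (free Booleans drawn according to $w_b$ realize the correct conditional distribution, and the lifted uniform sample realizes the $w_x$-weighted real distribution). Finally, \evaluate is a deterministic, exact membership test, so it realizes \pointquery with $\epsilon_P = 0$ and $\delta_P = 0$.

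Having matched the oracles, I would argue that the correctness analysis behind Theorem~\ref{thm: approxunion} and Lemma~\ref{lem:conf} depends only on the LTC coverage argument together with the stated oracle error and confidence guarantees, and never on the underlying bodies being literal convex subsets of $\Rbb^d$ rather than hybrid sets carrying $\mu$. Substituting $\mu$ for $\vol$ and the three \approxwmi oracles for the three \approxunion oracles, the estimator $T U / (k N_T)$ returned by \approxwmi is the very same coverage estimator, so Lemma~\ref{lem:conf} yields that it is an FPRAS for $\mu(R_\phi) = \WMI(\DNF)$ with error $\epsilon$ and confidence $\delta$, using $T = \frac{8\ln(\frac{8}{\delta})(1+\tilde{\epsilon})k}{\tilde{\epsilon}^2 - 8(\tilde{C}-1)k}$ iterations (with $\tilde{\epsilon}, \tilde{C}$ as in Theorem~\ref{thm: approxunion}), under the oracle bounds $\epsilon_V, \epsilon_S \leq \frac{\epsilon^2}{47k}$, $\epsilon_P \leq \frac{\epsilon^2}{47k^2}$, $\delta_V \leq \frac{\delta}{4k}$, and $\delta_S + \delta_P \leq \frac{\delta}{2276 \ln(8/\delta)\, k/\epsilon^2}$ — exactly the settings hard-coded in Algorithm~\ref{alg:wmi}.

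The main obstacle I expect is step~(iii) of this argument: rigorously justifying that replacing Lebesgue volume by the hybrid measure $\mu$ leaves the LTC coverage invariant and the union-volume concentration bounds of \cite{Bringmann} intact. Concretely, I would have to check that $\mu$ is a finite, countably additive measure on the product space for which the self-adjusting coverage estimator is unbiased up to the oracle errors, so that the Chernoff-style bounds underpinning Lemma~\ref{lem:conf} transfer unchanged. The concavity-to-convexity claim and the Fubini identity, by contrast, are routine.
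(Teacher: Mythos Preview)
Your proposal is correct and follows essentially the same route as the paper: reduce \approxwmi to \approxunion by checking that \clauseweight, \sample, and \evaluate instantiate the three oracles of Lemma~\ref{lem:conf}, with the lifting $\xbf_c \mapsto \xbf'_c$ and Fubini supplying the volume/integral identity and concavity of $w_x$ supplying convexity of $\xbf'_c$. Your measure-theoretic framing via $\mu$ and your explicit flag about whether the LTC/Bringmann analysis is measure-agnostic are slightly more careful than the paper, which simply asserts the reduction and verifies the oracle bounds, but the substance is the same.
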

\begin{proof}[Proof sketch]
\approxwmi is a variant of \approxunion, where the oracles are replaced with the specific oracles for \WMI. Thus, it suffices to show that all oracles in \approxwmi satisfy the conditions of Lemma \ref{lem:conf}. \evaluate is deterministic, so trivially satisfies the conditions. 
As for \clauseweight, we first verify that it correctly computes $\WMI(c, w \mid \Xbf, \Vbf)$.  We then show that \clauseweight meets the conditions of Lemma \ref{lem:conf}: as multiplication of Boolean weights is error-free, it is sufficient for \volume to have error $\epsilon \leq \epsilon_V$ and confidence $\delta \leq \delta_V$.
As for \sample,  we first show that sampling from the transformation result $\xbf'_c$ is equivalent to sampling from $\xbf_{c}$ according to $w$. Then, it suffices to run \convexbodysampler with parameters $\epsilon_S$ and $\delta_S$ to ensure \sample meets the requirements.
\end{proof}

The correctness of \approxwmi is clearly independent from the specific choice of oracles, and both for weak and strong oracles, the accuracy guarantees are preserved.  Assuming additionally that the oracles have an FPRAS, \approxwmi runs in polynomial time. More specifically, for \clauseweight runtime $r_c$, \sample runtime $r_s$, and \evaluate runtime $r_e$, \approxwmi runs in ${O(k \cdot r_c + T(r_s + r_e))}$. %

To illustrate, for the most general case of arbitrary concave weight functions and convex bodies, we can use the algorithm of \citeauthor{LovaszVol06} (\citeyear{LovaszVol06}), as the \volume oracle in \clauseweight, and hit-and-run for \convexbodysampler \cite{ChenHitAndRun}. These choices make \clauseweight run in time ${O^*(m+n^4(\frac{1}{\epsilon_V})^2)}$, where $m$ is the number of Boolean variables and $n$ is the number of real variables, and \sample run in time ${O^*(m + n^3(\frac{1}{\epsilon_S})^2)}$. Since \evaluate runs in deterministic polynomial time, namely $O(m + Wn)$, where $W$ is the width of a conjunction $c$, \approxwmi therefore runs in time: 
\begin{align*}
\begin{split}
&O^*\Big(km+kn^4(\frac{1}{\epsilon_V})^2 + Tm + Tn^3(\frac{1}{\epsilon_S})^2 +T(m +Wn)\Big)\\
=~&O^*\Big(km + \frac{k^3}{\epsilon^4}n^4 + \frac{k}{\epsilon^2}m + \frac{k^3}{\epsilon^6}n^3 + \frac{k(Wn + m)}{\epsilon^2}\Big)\\
=~&O^*\Big(\frac{k^3}{\epsilon^4}n^4 + \frac{k^3}{\epsilon^6}n^3 + \frac{k}{\epsilon^2}(m + Wn)\Big).\\
\end{split}
\end{align*}

Note that the time complexity of \sample can be reduced by restricting the class of bodies and weight functions used. Indeed, if $w$ is restricted to be linear, e.g., $3x_1 + 2x_2 - x_4$, then all bodies can be sampled approximately using optimized polytope sampling methods such as geodesic walks \cite{LeeV17}, which can run significantly faster for bodies defined with a small number of LRA constraints. Further to this, box-shaped bodies, defined by using no more than one variable per constraint, paired with a constant $w$, i.e., a uniform distribution over the problem domain, are an instance of unweighted model integration, and can be trivially sampled using uniform sampling. Nonetheless, we assume the general case of convex bodies in this work, and build our algorithm accordingly.  

\subsection{Extending \approxwmi: \approxwmidep}
\label{ssec:extension}

In Section \ref{ssec:indep}, we presented \approxwmi, an FPRAS for \WMI over \DNF formulas with factorized and concave weight function $w$. The factorization of $w$ simplifies \WMI, in that (i) it makes Boolean variable weights independent from real variables, and vice-versa, and (ii) simplifies the joint distribution over Booleans to a product of  weights. This standard factorization prevents any changes in the Boolean domain from affecting $w_x$, but can be used to capture weight functions with this behaviour by defining mutually exclusive Boolean partitions of the problem domain, in which no dependencies between Boolean and real variables exist, and then applying the factorization separately over each of these partitions \cite{Martires19}. However, the number of such partitions can be exponential in the worst-case, which makes this approach intractable in practice. 

In this section, we introduce a more general factorization for $w$, such that 
\begin{align}
\label{eq:newfact}
w(\xbf, \vbf) = w_x(\xbf,\vbf) \prod_{i=1}^m w_b(p_i),
\end{align} where  $w_x$ also depends on Boolean variables. We then propose an FPRAS \approxwmidep for this factorization. \approxwmidep extends \approxwmi,  in particular its oracle \clauseweight, to function under this more general factorization.%

We now describe how the more general weight function $w$ is defined.
Let $\xbf_c \subseteq \mathbb{R}^n$ be the polytope defined by the LRA constraints of a conjunct $c$, and $v \in \mathbb{B}^m$ be a Boolean assignment. 
Let $a, b: P(\Rbb^n) \mapsto \Rbb^+$, where $P$ denotes the power set operation, be functions such that 
\begin{align*}
a(\xbf_c) &= \min_\vbf \int\limits_{\xbf_c}  w_x(\ybf, \vbf)~d \ybf, \text{ and}\\
b(\xbf_c) &= \max_\vbf \int\limits_{\xbf_c}  w_x(\ybf, \vbf)~d \ybf.
\end{align*}
These functions are guaranteed to exist and are finite as the Boolean domain $\mathbb{B}^m$ is finite and all integral computations in the scope of \WMI are finite-valued. Hence, we note that $\forall \xbf_{c} \in P(\mathbb{R}^n), \forall \vbf \in \mathbb{B}^m,\exists a,b:  P(\Rbb^n) \mapsto \Rbb^+, \text{such that }$
\begin{align}
\label{eq:existence}
 a(\xbf_c) \leq \int\limits_{\xbf_c}  w_x(\ybf, \vbf)~d \ybf~\leq~ b(\xbf_{c}).
\end{align}
Let us define $\rho = \max_{\xbf_c} \frac{b(\xbf_{c})}{a(\xbf_{c})}$. In what follows, we restrict the choice of $w$ such that, for all $\xbf_c$, $\rho$ is upper-bounded by a polynomial in $\frac{1}{\epsilon_\text{Samp}}$, $\frac{1}{\delta_\text{Samp}}$, $n$, $m$, and $k$. 
In this case, we say that a weight function $w$ is \emph{$\rho$-restricted}.

Intuitively, \emph{$\rho$-restriction} ensures that integrals over the same body, computed with different Boolean instantiations of the weight function, yield results that are within a tractable ratio from one another, which in turn allows the efficient use of sampling techniques. 

While this restriction is similar in nature to restrictions on \emph{tilt} $\theta$ in existing works on weighted model counting (see e.g.~\cite{ChakrabortyFMSV14}), it is not identical to them. More specifically, tilt is the ratio between the maximum weight of a satisfying assignment and the minimum weight of a satisfying assignment, whereas $\rho$ is the ratio between the maximum integral and minimum integral of a weight function over any given real body. In fact, restrictions on $\rho$ are looser than restrictions on $\theta$. Indeed, when $\theta$ is bounded by a value $H$, it is simple to show that $\rho$ is also bounded by $H$, whereas the same cannot be said in the opposite direction.  

Consider a simple example, with one Boolean variable $v$ and one real variable $x$, such that $0 \leq x \leq 10$, and let  
\[w_x(x, v) = \begin{cases} 
\frac{e^{10}}{1 + e^{-x}} - 0.5e^{10} + 1 & \text{if} \ v \\ \frac{2e^{10}}{1 + e^{-x}} - e^{10} + 2 & \text{otherwise,} \end{cases}\]
In this example, $\rho$ is clearly upper-bounded by $2$, but $\theta$ is upper-bounded by $2+ e^{10}$. 
To upper-bound $\rho$ easily in practice, it is sufficient to upper-bound, for all real assignments $x$, the ratio between the maximum $w(\xbf,\vbf)$ and the minimum $w(\xbf,\vbf)$ over all Boolean assignments, or more formally, $w_{\max}(\xbf) / w_{\min}(\xbf)$, where $w_{\min}(\xbf) = \min_{\vbf} w(\xbf,\vbf)$ and $w_{\max}(\xbf) = \max_{\vbf} w(\xbf,\vbf)$. %

\approxwmidep extends \approxwmi to also handle weight functions $w$, where $w_x$ may depend on Boolean variables. To achieve this, \approxwmidep replaces the oracles \clauseweight and \sample used in Algorithm \ref{alg:wmi}, with \clauseweightdep and \sampledep, respectively (while other details remain unaffected). Hence, we present these extended oracles \clauseweightdep and \sampledep, presented in Algorithm \ref{alg:functions_dep}, in more detail.

\begin{algorithm}[tb]
	\caption{Subroutines of \approxwmidep as functions \clauseweightdep and \sampledep.}
	\label{alg:functions_dep}
	\begin{algorithmic}[] %
		\Function{\clauseweightdep}{$c, w, \Xbf, \Vbf$}[$\epsilon, \delta$]
		\State  $\epsilon_\text{Samp}, \epsilon_\text{Comp} \gets \frac{\epsilon}{1 + \sqrt{2}}$, $\delta_\text{Samp} \gets \frac{\delta}{2}$
		\State $s \gets \ln(\frac{2}{\delta_\text{Samp}})\frac{1}{\epsilon_\text{Samp}^2}\rho^2$ \Comment \# of sampling trials
		\State $\delta_\text{Comp} \gets \frac{\delta}{2s}$ 
		\For {$i \gets 1$ to $s$}
		\State $\tau_i \gets b_c$, $\Vbf \setminus b_c$ sampled according to $w_b$ \Comment ($\text{c}_1$)
\State Introduce a new variable $d$ in $\Xbf$ \Comment ($\text{c}_2$)
		\State $\xbf'_c \gets \xbf_c \land (0 \leq d \leq  w_x(\xbf, \tau_i))$
		\State $X_i \gets \volume_{\epsilon_{\text{Comp}},\delta_\text{Comp}}(\xbf'_{c})$ \Comment ($\text{c}_3$)
		\EndFor 
		\State $\text{BoolWeight} \gets \prod_{p \in {c_i}} w_b(p)$ \Comment ($\text{c}_4$)
		\vspace{0.1cm}
		\State \textbf{return } $\text{BoolWeight} \cdot \frac{1}{s}\sum_{i=1}^{s}X_i$ \Comment ($\text{c}_5$)
		\EndFunction
		\Function{\sampledep}{$c, w, \Xbf, \Vbf$}[$\epsilon, \delta$]
		\State $p_\text{Bool} \gets$ $b_c$, $\Vbf \setminus b_c$ sampled according to $w_b$ \Comment ($\text{s}_1$)
		\State Introduce a new variable $d$ in $\Xbf$ \Comment ($\text{s}_2$)
		\State $\xbf'_c \gets \xbf_c \land (0 \leq d \leq w_x(\xbf, p_\text{Bool}))$
		\State $p_\text{Real}\gets \convexbodysampler(\xbf'_c)[\epsilon, \delta]$ \Comment ($\text{s}_3$)
		\State \textbf{return } $p_\text{Bool}$, $p_\text{Real}$ \Comment ($\text{s}_4$)
		\State 
		\EndFunction
	\end{algorithmic}
\end{algorithm}

\paragraph{\clauseweightdep.} \clauseweightdep performs  sampling over the set of Boolean assignments to estimate the \WMI of $c$, and this sampling occurs with  error $\epsilon_\text{Samp}$ and confidence $\delta_\text{Samp}$. The number $s$ of sampling trials is a function of $\epsilon_\text{Samp}$ and $\delta_\text{Samp}$, as well as $\rho$, the integral ratio defined earlier. Within every trial, \volume is called with error $\epsilon_\text{Comp}$ and confidence $\delta_\text{Comp}$. 

More specifically, given a conjunct $c$ and a weight function $w$, \clauseweightdep performs $s$ \emph{sampling rounds} to estimate $\WMI(c, w~|~\Xbf, \Vbf)$, where each sampling round consists of randomly sampling a Boolean assignment $\tau$ satisfying $c$ according to $w_b$ ($\text{c}_1$), computing the induced weight function $w_x(x, \tau)$ and using it to obtain the transformed convex body $\xbf'_c$ ($\text{c}_2$), and computing the weighted integral over $c$ using \volume ($\text{c}_3$). At the end of the $s$ sampling steps, the product of all $b_c$ literal weights is computed ($\text{c}_4$) , and the function returns this product, multiplied by the average of all sampling results, as an approximation of $\WMI(c, w~|~\Xbf, \Vbf)$ ($\text{c}_5$).

\paragraph{\sampledep.} This function is defined almost identically to \sample in \approxwmi, with the only minor difference being that $w_x(\xbf, p_\text{Bool})$ must be induced from $p_\text{Bool}$ in \sampledep($\text{s}_2$) prior to applying the transformation. This is because $w_x$ also depends on Boolean variables in this setting. Unlike \sample, where Boolean and real variable sampling can be done in any order, it is necessary for Boolean sampling to run first in \sampledep, so as to condition $w_x$ on the Boolean sample output and subsequently sample from $\xbf_c$ according to the induced weight function $w_x(\xbf, p_\text{Bool})$.

\subsection{\approxwmidep is an FPRAS}
We show that \approxwmidep is an FPRAS for $\WMI(\DNF)$, by lifting the result given in Theorem~\ref{thm: wmi_indep_fpras}.
To do so, we first need to show the correctness of \clauseweightdep, and prove that it is an FPRAS for $\WMI(c, w~|~\Xbf, \Vbf)$ over concave, $\rho$-restricted $w$ factorized using this more general factorization.  

\begin{lemma}
\label{lem:voldep}
	\clauseweightdep relative to Monte-Carlo sampling error $\epsilon_\text{Samp}$ and confidence $\delta_\text{Samp}$, and an FPRAS \volume with error $\epsilon_\text{Comp}$ and confidence $\delta_\text{Comp}$, is an FPRAS for $\WMI(c, w | \Xbf, \Vbf)$, where $c$ is a clause and $w$ is concave, $\rho$-restricted, and factorized according to Equation \ref{eq:newfact}, with error $\epsilon$, confidence $\delta$, and using $s = \ln(\frac{2}{\delta_\text{Samp}})\frac{1}{\epsilon_\text{Samp}^2}\rho^2$ iterations, for 	\begin{enumerate}[leftmargin=2\parindent]
		\item $\epsilon_\text{Samp}, \epsilon_\text{Comp} \leq \frac{\epsilon}{1 + \sqrt{2}}$,
		\item $\delta_\text{Samp} \leq \frac{\delta}{2}$, and
		\item $\delta_\text{Comp} \leq \frac{\delta}{2s}.$ 
	\end{enumerate}
\end{lemma}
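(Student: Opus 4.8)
The plan is to read \clauseweightdep as a two-level randomized estimator: an outer Monte-Carlo average over Boolean assignments whose summands are themselves FPRAS volume estimates. First I would establish the target identity. Unpacking Equation~\ref{eq:newfact} for a single conjunct $c$, the weights of the literals fixed by $c$ factor out as the exact constant $\text{BoolWeight} = \prod_{p \in b_c} w_b(p)$, while the free Boolean variables $\Vbf \setminus b_c$, each drawn according to $w_b$, induce a distribution over the remaining assignments. Writing $g(\tau) = \int_{\xbf_c} w_x(\xbf, \tau)\, d\xbf$, I would show
\[
\WMI(c, w \mid \Xbf, \Vbf) = \text{BoolWeight} \cdot \mathbb{E}_{\tau}\big[g(\tau)\big],
\]
where $\tau$ sets the literals of $b_c$ and samples the free variables from $w_b$; this is just the definition of expectation once the product $\prod w_b$ over free variables is recognized as the sampling probability of $\tau$.

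Next I would reuse the transformation argument from \clauseweight: the body $\xbf'_c = \xbf_c \land (0 \le d \le w_x(\xbf, \tau))$ has volume exactly $g(\tau)$ by Cavalieri's principle, and since $w$ is concave, $w_x(\cdot, \tau)$ is concave for each fixed $\tau$, so $\xbf'_c$ is a convex body and the volume FPRAS \volume applies. Thus each $X_i$ returned at step $(\text{c}_3)$ is a $(1 \pm \epsilon_\text{Comp})$ estimate of $g(\tau_i)$ with confidence $1 - \delta_\text{Comp}$.

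For the concentration step I would invoke $\rho$-restriction: by Equation~\ref{eq:existence} every $g(\tau_i)$ lies in $[a(\xbf_c), b(\xbf_c)]$ with $b(\xbf_c)/a(\xbf_c) \le \rho$, so after normalizing by $a(\xbf_c)$ the summands lie in $[1,\rho]$. A Hoeffding bound, using $\mathbb{E}_\tau[g] \ge a(\xbf_c)$ and $(b(\xbf_c)-a(\xbf_c))^2 \le \rho^2 a(\xbf_c)^2$, then shows that $\frac{1}{s}\sum_i g(\tau_i)$ is within a multiplicative $(1 \pm \epsilon_\text{Samp})$ of $\mathbb{E}_\tau[g(\tau)]$ with failure probability at most $\delta_\text{Samp}$, precisely for $s = \ln(\frac{2}{\delta_\text{Samp}})\frac{1}{\epsilon_\text{Samp}^2}\rho^2$. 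I would then combine the two randomness layers by conditioning: on the event that all $s$ volume calls succeed, which by a union bound holds with probability at least $1 - s\,\delta_\text{Comp} = 1 - \delta/2$, we have $X_i \in (1\pm\epsilon_\text{Comp})\,g(\tau_i)$ deterministically, hence $\frac{1}{s}\sum_i X_i \in (1\pm\epsilon_\text{Comp})\frac{1}{s}\sum_i g(\tau_i)$; intersecting with the Monte-Carlo success event (probability $\ge 1 - \delta_\text{Samp} = 1 - \delta/2$) places $\frac{1}{s}\sum_i X_i$ within $(1\pm\epsilon_\text{Comp})(1\pm\epsilon_\text{Samp})$ of $\mathbb{E}_\tau[g]$ with total failure probability at most $\delta$. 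Multiplying by the exact $\text{BoolWeight}$ at step $(\text{c}_5)$ then yields the claimed approximation of $\WMI(c, w\mid\Xbf,\Vbf)$.

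The final ingredient is the algebraic check that the chosen parameters collapse the two multiplicative errors into one: with $\epsilon_\text{Samp} = \epsilon_\text{Comp} = \frac{\epsilon}{1+\sqrt 2}$ one has $(1+\frac{\epsilon}{1+\sqrt 2})^2 \le 1 + \epsilon$ and $(1-\frac{\epsilon}{1+\sqrt 2})^2 \ge 1 - \epsilon$ for $\epsilon \le 1$, using $\epsilon^2 \le \epsilon$ together with the identity $2(\sqrt 2 - 1) + (3 - 2\sqrt 2) = 1$. The main obstacle I anticipate is not any single step but the bookkeeping across the two independent sources of randomness: ensuring the $\rho$-dependent sample count $s$ forces the outer average to concentrate while the union bound over the $s$ inner volume calls keeps the aggregate confidence loss within $\delta/2$, so that both the error budget $\epsilon$ and the confidence budget $\delta$ split cleanly into halves. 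The conditioning device is what keeps this clean, since it sidesteps any bias introduced by the inner FPRAS oracle.
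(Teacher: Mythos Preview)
Your proposal is correct and follows essentially the same route as the paper: the same expectation identity $\WMI(c,w\mid\Xbf,\Vbf)=\text{BoolWeight}\cdot\mathbb{E}_\tau[g(\tau)]$, the same $\xbf_c\to\xbf'_c$ volume reduction, the same Hoeffding argument with the $\rho$-restriction to derive $s$, the same union bound splitting $\delta$ into $\delta/2+\delta/2$, and the same $(1\pm\frac{\epsilon}{1+\sqrt2})^2\subset(1\pm\epsilon)$ check. If anything, your conditioning presentation and explicit invocation of Cavalieri make the two-layer error accounting cleaner than the paper's sketch.
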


\begin{proof}[Proof sketch]
Under this more general factorization, \WMI for a conjunct $c$ can be computed as a sum of integrals over the real polytope $\xbf_c$ given all (possibly exponential) possible weight functions induced by Boolean assignments. Monte-Carlo sampling approximates $\WMI(c, w \mid \Xbf, \Vbf)$, avoids the exponential blow-up, and can provide guarantees, since $w$ is $\rho$-restricted. %
Clearly, the results of every sampling step are $s$ independent and identically distributed (i.i.d) random variables which, from Equation \ref{eq:existence}, are bounded by $a(\xbf_c)$ and $b(\xbf_c)$. Applying the Hoeffding bound with the target additive difference a $\epsilon_\text{Samp}$ multiple of the expected \WMI yields the lower bound for $s$ shown in Algorithm \ref{alg:functions_dep}. \clauseweightdep therefore also runs in polynomial time with respect to $\frac{1}{\epsilon_\text{Samp}}$, $\frac{1}{\delta_\text{Samp}}$, $n$, $m$,  and  $k$ %
, since a sampling iteration runs in polynomial time (\volume is an FPRAS), and $s$ is polynomial given that $w$ is $\rho$-restricted.

It now remains to show that, under the conditions of Lemma \ref{lem:voldep}, \clauseweightdep produces an $\epsilon, \delta$ approximation of $\WMI(c, w | \Xbf, \Vbf)$. First, we prove that, when no failure occurs within \clauseweightdep (i.e., all oracles and the sampling procedure respect their error bounds), the provided error bounds in Lemma \ref{lem:voldep} produce an estimate within the overall $\epsilon$ error requirement, and the multiplication of the two $\frac{\epsilon}{1 + \sqrt{2}}$ bounds for sampling and volume computation, combined with $\epsilon \leq 1$, yields the desired result. Second, we show that, with the asserted confidence bounds, the probability of any failure in \clauseweightdep, is upper-bounded by $\delta$, using the union bound.
\end{proof}
	
We can now combine Theorem \ref{thm: wmi_indep_fpras} and Lemma \ref{lem:voldep} to show that \approxwmidep is an FPRAS for \WMI(\DNF) for this more general factorization of $w$, and given a concave and $\rho$-restricted $w$. 

\begin{theorem}
\label{thm:approxdep}
	\approxwmidep,  
relative to FPRAS oracles \clauseweightdep, \sampledep, and \evaluate having error $\epsilon_V$, $\epsilon_S$, $\epsilon_P$ and confidence $\delta_V$, $\delta_S$, $\delta_P$, respectively, is an FPRAS for $\WMI(\DNF)$ over a $w$ that is concave, $\rho$-restricted, and factorized according to Equation \ref{eq:newfact}, with error $\epsilon$ and confidence $\delta$ and using $T = \frac{8\ln(\frac{8}{\delta})(1+\tilde{\epsilon})k}{\tilde{\epsilon}^2 - 8(\tilde{C}-1)k}$ iterations, for   
\begin{enumerate}[leftmargin=2\parindent]
\item $\epsilon_V, \epsilon_S \leq \frac{\epsilon^2}{47k}$, $\epsilon_P \leq \frac{\epsilon^2}{47k^2}$, and 
\item $\delta_V \leq \frac{\delta}{4k}$, $\delta_S + \delta_P \leq \frac{\delta}{2276 \ln(\frac{8}{\delta}) \frac{k}{\epsilon^2}}$.
\end{enumerate}
\end{theorem}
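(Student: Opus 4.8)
The plan is to prove Theorem~\ref{thm:approxdep} by reducing it to Theorem~\ref{thm: wmi_indep_fpras}, exploiting the fact that \approxwmidep is structurally identical to \approxwmi: it runs the very same outer loop (line for line, including the trial count $T$, the clause-selection step, the coverage loop, and the estimator $TU/(kN_T)$), and the only changes are that \clauseweight is replaced by \clauseweightdep and \sample by \sampledep, while \evaluate is untouched. Theorem~\ref{thm: wmi_indep_fpras} in turn rests entirely on Lemma~\ref{lem:conf}, whose hypothesis is that the three oracles are FPRAS oracles meeting the stated error and confidence bounds. Hence it suffices to check that the three oracles of \approxwmidep satisfy exactly the conditions of Lemma~\ref{lem:conf} under the parameter choices in the statement; everything downstream (the value of $T$, the correctness of the coverage estimator, and the FPRAS conclusion) is then inherited verbatim.

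First I would dispatch the two easy oracles. \evaluate is unchanged and deterministic, so it trivially plays the \pointquery role with $\epsilon_P,\delta_P$ (indeed with zero error). For \clauseweightdep, I would invoke Lemma~\ref{lem:voldep} directly: it already establishes that \clauseweightdep is an FPRAS for $\WMI(c, w \mid \Xbf, \Vbf)$ over concave, $\rho$-restricted $w$ factorized by Equation~\ref{eq:newfact}. Therefore, calling it with error $\epsilon_V \leq \frac{\epsilon^2}{47k}$ and confidence $\delta_V \leq \frac{\delta}{4k}$ produces a multiplicative $(\epsilon_V,\delta_V)$ approximation of the clause integral, which is precisely what the \volumequery role in Lemma~\ref{lem:conf} demands. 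No new analysis is needed here beyond observing that the quantity \clauseweightdep estimates, $\WMI(c_i)$, is the correct per-clause ``weight'' $U_i$ required by the coverage argument of \approxunion.

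The real work is \sampledep. Here I would argue that, although $w_x$ now depends on the Booleans, fixing $p_\text{Bool}$ first makes $w_x(\cdot, p_\text{Bool})$ a single concave function of $\xbf$, so the lifted body $\xbf'_c = \xbf_c \land (0 \le d \le w_x(\xbf, p_\text{Bool}))$ is convex and \convexbodysampler returns an approximately uniform point on it; projecting away the $(n{+}1)$-th coordinate yields a real sample distributed (up to the sampler's $\epsilon_S$-error) proportionally to $w_x(\cdot, p_\text{Bool})$ over $\xbf_c$. The step that must be carried out carefully is showing that the resulting joint sample $(p_\text{Bool}, p_\text{Real})$ carries the distribution that the LTC coverage identity assumes for a point drawn from $c_i$, namely one proportional to $w$ restricted to the models of $c_i$. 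This is where the Boolean-dependence of $w_x$ interacts with the factorization $w = w_x(\xbf,\vbf)\prod_i w_b(p_i)$, and where the necessity of sampling the Booleans before the reals is actually used.

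I expect this last point to be the main obstacle. In the independent case the within-clause weighted distribution factorizes cleanly, but once $w_x$ depends on $\vbf$ the per-clause normalizer $\int_{\xbf_c} w_x(\ybf,\vbf)\,d\ybf$ varies with $\vbf$, so I must verify that the combination of clause selection proportional to $U_i = \WMI(c_i)$ and the conditional sampling inside \sampledep reproduces the target measure demanded by the coverage estimator, invoking the $\rho$-restriction exactly where it is needed to keep these normalizers under control. Once \sampledep is shown to meet the \samplequery conditions with $\epsilon_S \le \frac{\epsilon^2}{47k}$ and $\delta_S$ as specified, all three oracles satisfy Lemma~\ref{lem:conf}, and plugging them into \approxwmidep with the identical parameter settings and trial count $T = \frac{8\ln(\frac{8}{\delta})(1+\tilde{\epsilon})k}{\tilde{\epsilon}^2 - 8(\tilde{C}-1)k}$ yields the claimed $(\epsilon,\delta)$-FPRAS, completing the lift from Theorem~\ref{thm: wmi_indep_fpras}.
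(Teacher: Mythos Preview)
Your strategy coincides with the paper's: the paper gives no detailed proof of Theorem~\ref{thm:approxdep} beyond the sentence ``We can now combine Theorem~\ref{thm: wmi_indep_fpras} and Lemma~\ref{lem:voldep},'' so your plan---handle \clauseweightdep via Lemma~\ref{lem:voldep}, leave \evaluate unchanged, and verify \sampledep meets the \samplequery conditions of Lemma~\ref{lem:conf}---is precisely the intended reduction.

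Where you go beyond the paper is in flagging the \sampledep step as nontrivial, and your concern is well-founded. Sampling $p_\text{Bool}$ according to $\prod_i w_b(p_i)$ and then $p_\text{Real}$ proportional to $w_x(\cdot,p_\text{Bool})$ over $\xbf_c$ produces a joint density proportional to
\[
\Big(\prod_i w_b(p_i)\Big)\cdot \frac{w_x(\xbf,\vbf)}{Z(\vbf)},\qquad Z(\vbf)=\int_{\xbf_c} w_x(\ybf,\vbf)\,d\ybf,
\]
whereas the coverage identity needs density proportional to $w(\xbf,\vbf)=w_x(\xbf,\vbf)\prod_i w_b(p_i)$. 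These agree only when $Z(\vbf)$ is constant in $\vbf$. The $\rho$-restriction bounds $\max_\vbf Z(\vbf)/\min_\vbf Z(\vbf)\le\rho$, but $\rho$ is merely polynomial in the inputs, not of order $\epsilon^2/k$, so it does not by itself make \sampledep an $\epsilon_S$-sampler for the stated $\epsilon_S\le\epsilon^2/47k$. The paper does not address this; it simply asserts that \sampledep plays the required role. So the obstacle you isolate is a genuine gap in both your outline and the paper's argument, and closing it would require either modifying \sampledep (e.g., a rejection step against $Z(\vbf)$, at cost a factor of $\rho$) or a separate analysis showing the coverage estimator tolerates this specific bias.
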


Finally, the running time of \approxwmidep, as a function of  \clauseweightdep runtime $r'_c$, \sampledep runtime $r'_s$, and \evaluate runtime $r_e$, is ${O(k \cdot r'_c + T(r'_s + r_e))}$, analogously to \approxwmi. Furthermore, for the same choice of \convexbodysampler, it is easy to see that $r_s = r'_s$. However, the main difference in running time between \approxwmi and \approxwmidep comes from the difference between $r'_c$ and $r_c$. Indeed, given the same choice of \volume oracle, with running time $r_v$, $r_c = O(m + r_v)$, whereas 
$
r'_c = O(s(m + r_v)) = O(\frac{1}{\epsilon^4} \ln(\frac{1}{\delta})(m + r_v)).
$
Hence, the wider applicability of \approxwmidep comes at the expense of a larger runtime complexity, owing to the larger power of $\frac{1}{\epsilon}$ required for \clauseweightdep.
\begin{figure*}[!ht]
\begin{tikzpicture}[scale=0.64]

\definecolor{color0}{rgb}{0.12156862745098,0.466666666666667,0.705882352941177}
\definecolor{color1}{rgb}{1,0.498039215686275,0.0549019607843137}
\definecolor{color2}{rgb}{0.172549019607843,0.627450980392157,0.172549019607843}
\definecolor{color3}{rgb}{0.83921568627451,0.152941176470588,0.156862745098039}

\begin{axis}[
legend cell align={left},
legend entries={{$W=3$},{$W=5$},{$W=8$},{$W=13$}},
legend style={at={(0.01,0.99)}, font=\tiny, line width=0.4mm, anchor=north west, draw=white!80.0!black},
tick align=outside,
tick pos=left,
title={$\epsilon=0.15, \delta=0.05$},
x grid style={white!69.01960784313725!black},
xlabel={Number of variables ($m+n$)},
xmajorgrids,
xmin=50, xmax=1000,
y grid style={white!69.01960784313725!black},
y label style={at={(axis description cs:-0.00,.5)},anchor=south},
x label style={at={(axis description cs:0.5,-0.02)},anchor=north},
ylabel={Execution time (s)},
ymajorgrids,
ymin=0, ymax=5000
]
\addlegendimage{no markers, color0}
\addlegendimage{no markers, color1}
\addlegendimage{no markers, color2}
\addlegendimage{no markers, color3}
\addplot [line width=0.4mm, color0]
table [row sep=\\]{%
100	673.456 \\
200	2035.2955 \\
300	3241.0245 \\
400	4941.1244 \\
};
\addplot [line width=0.4mm, color1]
table [row sep=\\]{%
100	127.792 \\
200	307.363 \\
300	685.7165 \\
400	1627.0975 \\
500	1961.358 \\
600	3206.826 \\
700	4241.93 \\
};
\addplot [line width=0.4mm, color2]
table [row sep=\\]{%
100	122.493 \\
200	103.0015 \\
300	249.0355 \\
400	352.359 \\
500	489.279 \\
600	686.049 \\
700	827.5085 \\
800	1380.323 \\
900	1585.0565 \\
1000	2042.9885 \\
};
\addplot [line width=0.4mm, color3]
table [row sep=\\]{%
100	45.5175 \\
200	158.312 \\
300	139.8635 \\
400	218.323 \\
500	320.593 \\
600	422.9295 \\
700	452.723 \\
800	527.8875 \\
900	779.919 \\
1000	946.253 \\
};
\end{axis}
\end{tikzpicture}
\begin{tikzpicture}[scale=0.64]

\definecolor{color0}{rgb}{0.12156862745098,0.466666666666667,0.705882352941177}
\definecolor{color1}{rgb}{1,0.498039215686275,0.0549019607843137}
\definecolor{color2}{rgb}{0.172549019607843,0.627450980392157,0.172549019607843}
\definecolor{color3}{rgb}{0.83921568627451,0.152941176470588,0.156862745098039}

\begin{axis}[
legend cell align={left},
legend entries={{$W=3$},{$W=5$},{$W=8$},{$W=13$}},
legend style={at={(0.02,0.98)}, font=\tiny, line width=0.4mm,anchor=north west, draw=white!80.0!black},
tick align=outside,
tick pos=left,
title={$\epsilon=0.25, \delta=0.15$},
x grid style={white!69.01960784313725!black},
xlabel={Number of variables ($m+n$)},
xmajorgrids,
xmin=50, xmax=1000,
y grid style={white!69.01960784313725!black},
y label style={at={(axis description cs:-0.00,.5)},anchor=south},
x label style={at={(axis description cs:0.5,-0.02)},anchor=north},
ylabel={Execution time (s)},
ymajorgrids,
ymin=0, ymax=5000
]
\addlegendimage{no markers, color0}
\addlegendimage{no markers, color1}
\addlegendimage{no markers, color2}
\addlegendimage{no markers, color3}
\addplot [line width=0.4mm, color0]
table [row sep=\\]{%
100	70.3298267668378 \\
200	269.133403299917 \\
300	517.992043100702 \\
400	869.87811037619 \\
500	1449.93498609445 \\
600	2159.2811838649 \\
800	4542.12449892148 \\
};
\addplot [line width=0.4mm, color1]
table [row sep=\\]{%
100	26.9806045541402 \\
200	50.0343838108165 \\
300	109.419076403219 \\
400	255.210158718536 \\
500	253.016979257567 \\
600	630.553504063634 \\
700	820.713695247077 \\
800	1220.80788287187 \\
900	1407.18087385479 \\
1000	2026.65739228447 \\
};
\addplot [line width=0.4mm, color2]
table [row sep=\\]{%
100	15.9341998727557 \\
200	46.2906157279371 \\
300	57.0361323009228 \\
400	110.436019953926 \\
500	132.533255535635 \\
600	244.995750622471 \\
700	423.182070802101 \\
800	541.559774340906 \\
900	595.02395932924 \\
1000	833.492430644802 \\
};
\addplot [line width=0.4mm, color3]
table [row sep=\\]{%
100	22.2194167732746 \\
200	40.0061639975685 \\
300	44.3505420210577 \\
400	67.0665132911273 \\
500	88.3408451406458 \\
600	125.483281940778 \\
700	166.572419166643 \\
800	184.334991093523 \\
900	365.201851145921 \\
1000	483.042941039443 \\
};
\end{axis}

\end{tikzpicture}
\begin{tikzpicture}[scale=0.64]

\definecolor{color0}{rgb}{0.12156862745098,0.466666666666667,0.705882352941177}
\definecolor{color1}{rgb}{1,0.498039215686275,0.0549019607843137}
\definecolor{color2}{rgb}{0.172549019607843,0.627450980392157,0.172549019607843}
\definecolor{color3}{rgb}{0.83921568627451,0.152941176470588,0.156862745098039}

\begin{axis}[
legend cell align={left},
legend entries={{$W=3$},{$W=5$},{$W=8$},{$W=13$}},
legend style={at={(0.02,0.98)}, line width=0.4mm,font=\tiny, anchor=north west, draw=white!80.0!black},
tick align=outside,
tick pos=left,
title={$\epsilon=0.35, \delta=0.25$},
x grid style={white!69.01960784313725!black},
xlabel={Number of variables ($m+n$)},
xmajorgrids,
xmin=50, xmax=1000,
y grid style={white!69.01960784313725!black},
y label style={at={(axis description cs:-0.00,.5)},anchor=south},
x label style={at={(axis description cs:0.5,-0.02)},anchor=north},
ylabel={Execution time (s)},
ymajorgrids,
ymin=0, ymax=5000
]
\addlegendimage{no markers, color0}
\addlegendimage{no markers, color1}
\addlegendimage{no markers, color2}
\addlegendimage{no markers, color3}
\addplot [line width=0.4mm, color0]
table [row sep=\\]{%
50	28.63 \\
100	65.345 \\
200	181.909 \\
300	341.614 \\
400	793.5755 \\
500	1126.8665 \\
600	1614.9835 \\
700	2281.744 \\
800	2777.89 \\
900	3793.814 \\
1000	4897.800812347 \\
};
\addplot [line width=0.4mm, color1]
table [row sep=\\]{%
50	19.9017 \\
100	29.2761 \\
200	59.68305 \\
300	103.5279 \\
400	212.59665 \\
500	277.0488 \\
600	562.1598 \\
700	772.7229 \\
800	1131.4197 \\
900	1239.09615 \\
1000	1594.8036 \\
};
\addplot [line width=0.4mm, color2]
table [row sep=\\]{%
50	17.2536 \\
100	21.21 \\
200	34.4001 \\
300	54.39 \\
400	89.7288 \\
500	111.8187 \\
600	170.1273 \\
700	204.9936 \\
800	465.6117 \\
900	449.6559 \\
1000	642.0915 \\
};
\addplot [line width=0.4mm, color3]
table [row sep=\\]{%
50	9.81575 \\
100	22.35625 \\
200	39.1825 \\
300	36.25475 \\
400	63.931 \\
500	71.33525 \\
600	108.54375 \\
700	126.2205 \\
800	177.43775 \\
900	270.3985 \\
1000	380.24675 \\
};
\end{axis}

\end{tikzpicture}

	\caption{Runtime results for \approxwmi relative to different $\epsilon,\delta$. For $\epsilon=0.35, \delta=0.25$, all instances, including those with 1K variables, terminate within 5000 seconds, and higher-width instances ($W=8,13$) all terminate within ~2000 seconds for all $\epsilon, \delta$.}
	\label{fig:exps}
\end{figure*}
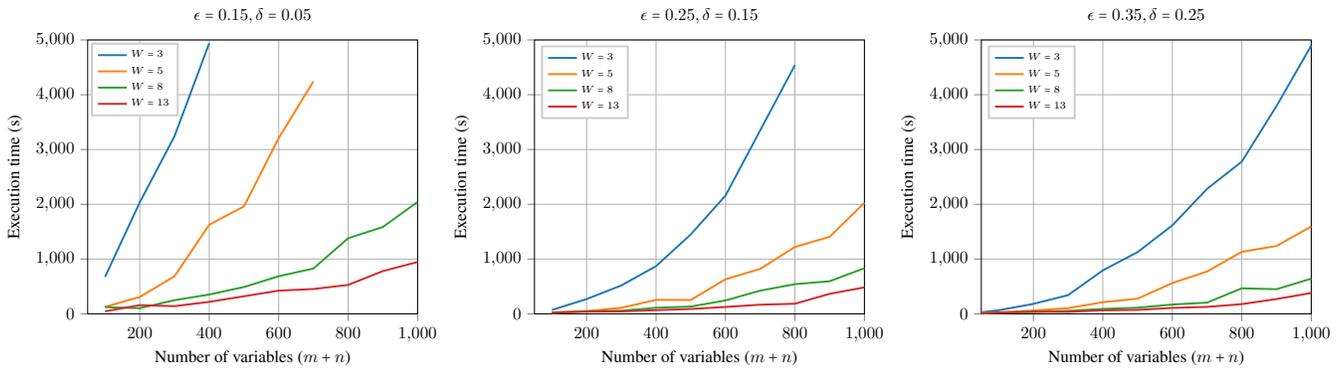
\section{Experimental Evaluation}
To evaluate the performance of \approxwmi, we generate random \DNF formulas and measure the time \approxwmi requires to solve them. We explain data generation and experimental setup in the following subsections.
\subsection{Generating Evaluation Data}
To evaluate \approxwmi, we generate \DNF formulas with total number of variables between $100$ and $1K$ inclusive, in increments of 100. Variables are split equally between real and Boolean (i.e., $m=n$).  Clause width $W$ is fixed between 3, 5, 8, 13, and the number of clauses is $k = \lfloor\frac{m+n+20}{W}\rfloor$. For every configuration ($m$, $n$, $k$, $W$), we generate 4 formulas, resulting in a total of 176 formulas used for evaluation.

Given a configuration, we generate a propositional DNF with $m+n$ variables. This DNF has $k \cdot W$ ``slots'', corresponding to the vacant literal positions to be filled in its clauses, and these slots are allocated to variables such that every variable appears at least once in the DNF. 

With probability 0.5, this allocation occurs uniformly. However, when this isn't the case, a ``privileged mechanism'' is used, such that a randomly selected small subset of ``privileged'' variables is allocated significantly more slots than the remaining variables (to encourage more dependencies across clauses), thereby giving these variables a larger impact on the formula \WMI. Once all variables are allocated, their literal sign is chosen uniformly at random. This data generation is also used and presented in more detail earlier \cite{ACL-AAAI20}.

Once a propositional DNF formula is generated, LRA constraints are then incorporated as follows: First, the slots for $n$ variables, corresponding to variable indices $m+1$ to $m+n$, are all replaced with LRA constraints, which in turn are generated at the clause level. For a conjunctive clause $c$ with $q$ LRA constraints to be generated:
\begin{enumerate}
	\item A random point $p$ in the real domain is uniformly sampled such that, by construction, $p \models c$, so that the real polytope defined by LRA constraints in $c$ is non-empty. 
	\item Generate $q$ constraints by
	(i) randomly selecting a subset $S$ of $Geom(1/L)$ real variables, where $Geom$ denotes the geometric distribution and $L=2$,
	(ii) randomly sampling $w_s \in \mathbb{R}^{|S|}$ weights for these variables and (iii) generating a random value $v$ and setting the linear constraint ${w.S \leq v}$.
	\item If $p$ satisfies $w.S \leq v$, then $w.S \leq v$ is added to $c$, otherwise, $w.S \geq v$, which $p$ must then satisfy, is added.
\end{enumerate}
\subsection{Experimental Setup}
In our experiments, we bound all real variables such that $\forall x \in \Xbf, 0 \leq x \leq 10$, to ensure finite integrals. We then evaluate \approxwmi with \emph{polynomial} weight functions: $w$ is a sum of up to 4 polynomial terms, each with a random constant weight, and degree randomly chosen using $Geom(0.6)$ and upper-bounded by 5. For terms with degree of 2 or higher, the constant weight is constrained to be negative to maintain concavity, e.g. $w = 20 - 3x_1^2x_4 - 2.2x_3^4 + x_2$.

We run \approxwmi using LattE \cite{baldoni2014user} for \volume  within \clauseweight as, despite being an exact solver, it supports polynomial weight functions while performing reliably in practice for smaller-scale formulas compared with approximate techniques \cite{EmirisF18}. 
We optimize our use of LattE by separately (trivially) integrating over variables not appearing in a clause or a term of $w$, and only running LattE over the appearing variables. This optimization is effective as (i) the number of real variables appearing in a clause is small in expectation (at worst $LW$) and (ii) the DNF formula structure breaks down the $n$-dimensional integration task into $k$ smaller parts which can be solved more efficiently (unlike for CNF). 
For sampling, we use hit-and-run \cite{ChenHitAndRun} for \convexbodysampler within \sample, with a constant factor $10^4$ used to compute the number of  walk iterations, as is standard in practice. It is \emph{unknown} whether this constant preserves theoretical guarantees, but it is widely used given that the best-known theoretical constant factors for hit-and-run are loose and prohibitively large (i.e., $10^{30}$) \cite{lovasz2003start}.

Finally, results were averaged over 5 runs with 3 $(\epsilon, \delta)$ settings: (0.15, 0.05), (0.25, 0.15), and (0.35, 0.25). Experiments ran with a timeout of 5000 seconds on a server with a Haswell 5-2640v3, 2.60GHz CPU and 12 GB of RAM. 

\subsection{Experimental Results}
All \approxwmi running times w.r.t. $m+n$, $W$, $\epsilon$, and $\delta$, are presented in Figure \ref{fig:exps}: \approxwmi performs very encouragingly, and solves \DNF instances with up to $1K$ variables within 5000 seconds over all clause widths $W$ for $\epsilon=0.35$ and $\delta=0.25$. In fact, instances with $1K$ variables and  $W=5,8, 13$ all run within 1600 seconds. For tighter $\epsilon$ and $\delta$, the system maintains high performance, despite the high power of $\epsilon^{-1}$ in the running time of \approxwmi due to \sample (cf. Section \ref{ssec:approxwmi_fpras}). Indeed, even with $\epsilon=0.15$ and $\delta=0.05$, all instances with widths 8 and 13 finish within $\sim$2000 seconds, whereas large instances of width 3 and $m+n \geq 500$, and width 5, $m+n \geq 800$, time out.

Somewhat unintuitively, system performance worsens as $W$ \emph{decreases}. For \DNF instances with $W=3$, \approxwmi requires almost triple the time compared to instances with higher $W$. Though this behavior is surprising, it can be attributed to an increased number of sampling replacements within \approxwmi. Indeed, for smaller widths, it is likelier that a call to \evaluate will yield ``True'' since there are less constraints to satisfy. Hence, further calls to \sample, which runs in $O^*(m + n^3)$, will be required, imposing a significant computational overhead on \approxwmi. This behavior justifies improved performance with increased width, as the expected number of calls to \sample decreases. 

These results confirm our intuitions about \approxwmi. First, they highlight that \approxwmi can indeed scale to large instances having up to 1K total variables, even with tight $\epsilon$ and $\delta$. Second, they show that \clauseweight calls are not a bottleneck, as they all run in less than 450 seconds in any instance,  due to the bounded width $W$ used in our experiments. Third, our results show that the main performance bottleneck for \approxwmi is the number of calls to \sample. This is particularly evident from the high dependence of running time on width $W$, and shows that, even with a reduced constant factor, convex body sampling remains a highly costly operation. 

The use of an exact \clauseweight oracle allows significant gains, owing to a reduced error requirement for \sample. Indeed, given exact \clauseweight and \evaluate, i.e., $\epsilon_V, \epsilon_P, \delta_V, \delta_P = 0$, Theorem \ref{thm: approxunion} and the union bound yield ${\epsilon_S < \frac{\epsilon^2}{8k}}$ and ${\delta_S \leq \frac{\delta}{1518 \ln(\frac{8}{\delta}) \frac{k}{\epsilon^2}}}$ respectively, and these looser bounds for sampling reduce the running time of \approxwmi significantly.
Overall, \approxwmi scales to \DNF instances with up to 1K variables using standard oracle implementations. This is particularly true for larger widths, as the number of \sample calls decreases. 

\section{Related Work}
\WMC is a unifying tool for probabilistic inference.~Inference in probabilistic graphical models~\cite{Koller-PGM} reduces to $\WMC(\CNF)$~\cite{SBK-05,CD08}. Similar reductions exist for Markov Logic Networks (MLNs) ~\cite{Richardson2006}, probabilistic logic programming~\cite{PPWMC}, and more generally, for relational models~\cite{GD16}. ~Still, \WMC cannot capture hybrid probabilistic models that are extensively studied; see e.g.~hybrid MLNs~\cite{WD08}, and hybrid Bayesian networks~\cite{GD05,SA12}. \WMI is proposed as a unifying inference tool in these hybrid models~\cite{Belle15}.

Weighted model integration/counting is \sharpP-hard \cite{Valiant79}, so is highly intractable. Nonetheless, many general-purpose exact solvers, based on several optimizations, have been developed. For instance,  \citeauthor{MorettinAIJ19}~(\citeyear{MorettinAIJ19}) propose a tool which uses SMT predicate abstraction techniques to reduce the number of models, for which an integration tool is called. Symbo \cite{Martires19} uses knowledge compilation to push computational overhead to an offline phase, and subsequently allow efficient online \WMI computation for factorized $w$. Finally, a technique is proposed to compute exact lower and upper bounds for \WMI based on hyper-rectangular decomposition and orthogonal transformations \cite{MerrellAD17}. 

Besides exact solvers, many approximate solvers have been developed for \WMI. For example, a hashing-based approach for \WMI is proposed~\cite{BelleBP15}, which extends existing hashing methods for \WMC, and uses propositional abstraction and requires a polynomial number of NP-oracle calls.
Sampo \cite{Martires19} extends Symbo with Monte Carlo sampling for integral computation, and thus leverages knowledge compilation to quickly evaluate sampled densities. Furthermore, Markov Chain Monte Carlo methods have been applied to \WMI, but such approaches do not provide any guarantees.
The only exception is the general tool of \citeauthor{Chistikov2017}~(\citeyear{Chistikov2017}) for $\#$SMT, which is the unweighted case of \WMI. This tool approximates the solution of a  $\#$SMT  instance by calling a SAT solver. This comes with approximation guarantees as in $\WMC(\CNF)$, using a randomized algorithm that makes polynomially many calls to the SAT solver \cite{Jerrum86}. 

To the best of our knowledge, there is no dedicated study for $\WMI(\DNF)$, despite $\WMC(\DNF)$ being extensively studied, partly motivated from the rich literature on probabilistic data management~\cite{Suciu-PDBs}: query answering in probabilistic databases reduces to $\WMC(\DNF)$ as every conjunctive query is equivalent to a \DNF via its lineage representation. 
The study of $\WMC(\DNF)$ has led to a number of tools, or algorithms from KLM~\cite{Karp89} to hashing-based techniques~\cite{Meel17-DNF}, and recently to neural model counting approaches \cite{ACL-AAAI20}.

Existing tools for $\WMC(\DNF)$ cannot handle extended data models which also include continuous distributions. Such data models are quite common, see e.g.~Monte Carlo Databases (MCDBs) ~\cite{MCDB08},
 and the system PIP~\cite{PIP10}, which extends MCDBs to efficiently query probabilistic data defined over both discrete and continuous distributions.
Abstracting away from subtle differences, these works also introduce approximate inference algorithms, but unlike our approach, these do not provide guarantees. 
The study of $\WMI(\DNF)$ hence serves as a unifying perspective for a class of data models. 

\section{Summary and Outlook}
In this work, we studied weighted model integration on \DNF structures and presented \approxwmi, which is an FPRAS given a concave and factorized weight function $w$. We also presented \approxwmidep, an extension to \approxwmi allowing more relaxed factorizations of $w$. Our FPRAS results for $\WMI(\DNF)$ complement the result of $\WMC(\DNF)$, and help draw a more complete picture of approximability for these problems. Our experimental analysis further shows the potential of \approxwmi.

Looking forward, we aim to investigate alternative approaches for approximate \WMI with guarantees, e.g., hashing-based approaches, to minimize the impact of sampling. We hope this work leads to further investigation of \WMI techniques, leading to more robust \WMI systems. 

\section*{Acknowledgements}

This work was supported by the Alan Turing Institute under the UK EPSRC grant EP/N510129/1, the AXA Research Fund, and by the EPSRC grants EP/R013667/1, EP/L012138/1, and EP/M025268/1. Ralph Abboud is funded by the Oxford-DeepMind Graduate Scholarship and the Alun Hughes Graduate Scholarship. Experiments for this work were conducted on servers provided by the Advanced Research Computing (ARC) cluster administered by the University of Oxford.
\bibliographystyle{named}
\bibliography{library}

\cleardoublepage
\section{Proofs of Main Results}
\label{app:proofs}
\subsection{Proof of Lemma \ref{lem:conf}}
We use the worst-case assumption that a single failure of any oracle, or a failure of the sampling procedure, implies the failure of \approxunion. Hence, we seek to upper-bound the union of these failure probabilities by $\delta$ to ensure \approxunion is an FPRAS. 

In Lemma 3 of \cite{Bringmann}, it is shown that Condition 1 is sufficient for reliable but weak oracles \volumequery, \samplequery, and \pointquery, to ensure \approxunion meets the $\epsilon$ error requirement with probability $\frac{3}{4}$. Hence, we now need to prove that, given unreliable oracles satisfying Condition 2, and failure probability of the LTC sampling procedure generalized from $\frac{1}{4}$ to a value $\delta_1 < \delta$, \approxunion also meets the confidence requirement $\delta$, and is therefore an FPRAS for computing the union of convex bodies.

The generalization of the failure probability of sampling from $\frac{1}{4}$ to $\delta_1 < \delta$ can trivially be achieved by multiplying the required number of trials $T$ specified in Theorem \ref{thm: approxunion} by $\frac{1}{3 \ln(2)}\ln(\frac{2}{\delta_1})$, yielding $T$ as specified in Lemma \ref{lem:conf}. We can now upper-bound the LTC sampling procedure failure probability, denoted $f_{\text{LTC}}$, by setting $\delta_1 = \frac{\delta}{4}$, which yields the value of $T$ shown in Lemma \ref{lem:conf} and Algorithm \ref{alg:wmi}. 
Given $\delta_1 = \frac{\delta}{4}$,  a failure probability of $\frac{3\delta}{4}$ remains, and shall be allocated for all possible oracle failures. We now show that the confidence requirements stated in Condition 2 perform this allocation and indeed upper-bound any oracle failure probability by $\frac{3\delta}{4}$, as required: 

Let $f_v$ denote the failure of any call to \volumequery, $f_s$ the failure of any call to \samplequery, and $f_p$ the failure of any call to \pointquery. Furthermore, let $f$ denote the overall failure probability of \approxunion. Given that \volumequery is called $k$ times within \approxunion, setting $\delta_V \leq \frac{\delta}{4k}$ yields, by the union bound: 
\[\Pr(f_v) \leq k \delta_V = \frac{\delta}{4}.
\]
Within \approxunion, \pointquery is called $T$ times, whereas \samplequery can be called up to $T$ times, depending on the success of \pointquery at the previous iteration. We assume the worst-case and consider that \samplequery is called $T$ times. Applying the union bound with the bound on $\delta_P$ and $\delta_S$ as specified in Condition 2 yields:
\begin{align*}
\begin{split}
&\Pr(f_p \vee f_s) ~\leq~ T (\delta_s) ~\leq~ T \frac{\delta}{2276 \ln(\frac{8}{\delta}) \frac{k}{\epsilon^2}}. 
\end{split}
\end{align*}
We now use the bound from Lemma 3 in \cite{Bringmann}, which gives that, under Condition 1, $T < 2365 \frac{k}{\epsilon^2}$ for failure probability $\frac{1}{4}$. Generalizing this bound for an arbitrary $\delta_1$ gives $T < \frac{2365}{3 \ln(2)} \ln(\frac{2}{\delta_1})\frac{k}{\epsilon^2} < 1138 \ln(\frac{8}{\delta})\frac{k}{\epsilon^2}$. Replacing $T$ with this bound in the failure probability yields 
\[\Pr(f_p \vee f_s) \leq \frac{\delta}{2}.\]

Finally, using the union bound to upper-bound the overall \approxunion failure probability yields
\begin{align*}
\begin{split}
\Pr(f) &\leq Pr(f_v \vee f_p \vee f_s) + \Pr(f_\text{LTC})\\
&\leq \frac{\delta}{4} + \frac{\delta}{2} +\frac{\delta}{4} = \delta,
\end{split}
\end{align*}
as required.
\subsection{Proof of Theorem \ref{thm: wmi_indep_fpras}}
\label{app:proofthm}
The difference between \approxunion and \approxwmi lies in the implementation of task-specific oracles for \WMI. Therefore, it is sufficient to show that all oracles in \approxwmi satisfy the requirements of Lemma \ref{lem:conf} to reach the desired result.

\paragraph{\clauseweight.} We first show the correctness of \clauseweight for computing $\WMI(c, w \mid \Xbf, \Vbf)$ given a concave and factorized $w$. Let $\vbf_c$ be a Boolean assignment satisfying $c$, $\xbf_c$ be the set of all real assignments satisfying $c(\xbf_c, \vbf_{c})$ (i.e., $\xbf_c$ is the polytope induced by the LRA constraints of $c$), and $b_{c}$ be the set of Boolean literals appearing in $c$. Given a factorized $w$, \WMI over $c$ reduces to
\begin{align*}
\begin{split}
\WMI(c, w \mid \Xbf, \Vbf) &= \sum_{\vbf_{c}}~\int\limits_{\xbf_{c}} 
w(\xbf,\vbf_{c})~d \xbf \\
&= \sum_{\vbf_{c}}~\int\limits_{\xbf_{c}} 
w_x(\xbf) \prod_{p \in \vbf_{c}} w_b(p)~d \xbf\\
&= \sum_{\vbf_{c}}  \prod_{p \in \vbf_{c}} w_b(p) ~\int\limits_{\xbf_{c}}  w_x(\xbf)~d \xbf\\  
&=  \Big(\prod_{p \in {b_c}} w_b(p)\Big) ~\int\limits_{\xbf_{c}}  w_x(\xbf)~d \xbf
\end{split}
\end{align*} 
Note that the separation of the sum from the integral in the last step is possible because the set $\xbf_{c}$ is constant across all Boolean assignments, since $c$ is a conjunction of atoms enforcing a unique set of real constraints. 

Hence, the \WMI of $c$ given a factorized, concave $w$ amounts to a product computation of Boolean literal probabilities plus a weighted integral computation over $\xbf_{c}$ in the real domain $\mathbb{R}^n$. In \clauseweight, as shown in Algorithm \ref{alg:functions}, the product computation corresponds to step \clauseweight($\text{c}_1$), whereas weighted integral computation corresponds to steps \clauseweight($\text{c}_2$)  and ($\text{c}_3$).  Weighted integral computation is performed using a transformation of $\xbf_c$ to $\xbf'_c$, as described in Section \ref{ssec:indep}, and it is trivial to prove that the volume of $\xbf'_c$ is equivalent to the weighted integral of $\xbf_{c}$. Therefore, \clauseweight indeed returns an approximation for $\WMI(c, w \mid \Xbf, \Vbf)$. 

We now show that \clauseweight meets the error and confidence criteria stipulated in Lemma \ref{lem:conf}. Multiplication of Boolean weights is error-free, hence it is necessary and sufficient that the call to \volume have error and confidence upper-bounded by $\epsilon_V$ and $\delta_V$ respectively, as is the case in Algorithm \ref{alg:wmi}, for \clauseweight to meet the requirements of Lemma \ref{lem:conf}.

\paragraph{\sample.}  We first show that sampling from the transformation result $\xbf'_c$ is equivalent to sampling from $\xbf_{c}$ according to $w$. Let $x'$ be a real assignment sampled uniformly from $\xbf'_c \in \mathbb{R}^{n+1}$, and let $x$ be $x'$ with the last dimension omitted i.e., $x'_{1:n}$. The weight of $x$ is therefore given by 
	\[
	\int_0^{w_x(x)} \frac{1}{\text{Vol}(\xbf_{B_i})}~dx \sim w_x(x),
	\]
as required. This implies that, when $\xbf'_c$ is successfully sampled with multiplicative error $\epsilon_S$, $x$ is sampled over $\xbf_c$ with weight function $w$ with the same error. 

We now show that \sample meets the requirements of Lemma \ref{lem:conf}. Boolean variable sampling, corresponding to step \sample($\text{s}_1$), occurs with zero error. Hence, it is necessary and sufficient to run \convexbodysampler with parameters $\epsilon_S$ and $\delta_S$ satisfying Condition 2 of Lemma \ref{lem:conf}, to ensure \sample meets the requirements. This is indeed the case with \sample, since \convexbodysampler is called with the same parameters as \sample, and $\epsilon_S$ and $\delta_S$, as specified in Algorithm \ref{alg:wmi}, meet Condition 2. In particular, $\delta_P = 0$, since \evaluate is deterministic, hence  $\delta_S = \frac{\delta}{2276 \ln(\frac{8}{\delta}) \frac{k}{\epsilon^2}}$ is the largest $\delta_S$ satisfying Condition 2.

\paragraph{\evaluate.} This function is deterministic, so trivially meets error and confidence requirements. 

Hence, \approxwmi is an FPRAS for \WMI given a concave and factorized $w$ and given FPRAS functions \clauseweight and \sample, and the deterministic \evaluate. 

\subsection{Proof of Lemma \ref{lem:voldep}}
\label{app:extension}

We first show the correctness of \clauseweightdep for computing $\WMI(c, w~|~\Xbf, \Vbf)$. Under the more general factorization, $w_x$ now depends on both real and Boolean variables. Hence, the decomposition of \WMI computation into two separate Boolean and real parts used in Appendix \ref{app:proofthm} no longer holds. Indeed, \WMI over a conjunction $c$ with $w$ following the more general factorization can only be written as
\begin{align*}
\begin{split}
&\WMI(c, w \mid \Xbf, \Vbf) 
\\&= \sum_{\vbf_{c}}  \prod_{p \in \vbf_{c}} w_b(p) \int\limits_{\xbf_{c}}  w_x(\xbf, \vbf_{c})~d \xbf\\  
&=  \Big(\prod_{p \in {b_c}} w_b(p)\Big) ~\sum_{\vbf_{c}}~\Big(\prod_{p \in \vbf_{c} / b_c} w_b(p)~\int\limits_{\xbf_{c}}  w_x(\xbf, \vbf_{c})~d \xbf\Big).
\end{split}
\end{align*} 
Note that the outer summation no longer simplifies, as the inner integral now depends on the summand with this factorization. Therefore, \WMI computation over $c$ in this setting requires up to $2^{m-|b_{c}|}$ (i.e., the number of Boolean assignments satisfying $c$) calls to a weighted volume computation tool (i.e., transformation to $\xbf'_c \in \mathbb{R}^{n+1}$ then call to \volume). This exponential number of calls is highly prohibitive in practice. Hence, \clauseweightdep uses \emph{Monte-Carlo sampling} to approximate the \WMI of $c$. This sampling is feasible, and can be done with guarantees, since $w$ is $\rho$-restricted, which bounds the set of possible integral values within a small enough range. 

Observe that the final equation for $\WMI(c, w \mid \Xbf, \Vbf)$ can also be written as \[\prod_{p \in {b_c}} w_b(p)~ \mathbb{E}_{\vbf_{c}}\Big[\int\limits_{\xbf_{c}}  w_x(\xbf, \vbf_{c})~d \xbf\Big].\]  Monte-Carlo sampling for the expectation component of $ \WMI(c, w \mid \Xbf, \Vbf)$ is then done as follows:
\begin{enumerate}
	\item  Sample a random Boolean assignment $\tau$ using $w_b$ (Step \clauseweightdep ($\text{c}_1$))
	\item  Compute the resulting weighted model integration over $\xbf_c$ given $w_x(\xbf, \tau)$ using $\volume_{\epsilon,\delta}$. (Steps \clauseweightdep ($\text{c}_{2}$, $\text{c}_{3}$))
\end{enumerate}
Finally, an approximation for $ \WMI(c, w \mid \Xbf, \Vbf)$ is returned by multiplying the average of sample results by the product of Boolean weights, which corresponds to steps \clauseweightdep ($\text{c}_{4}$, $\text{c}_{5}$) in \clauseweightdep. 

Clearly, the results of every sampling step are $s$ independent and identically distributed (i.i.d) random variables. Let $\bar{X}$ denote the mean of these $s$ random variables, and let $\mu =\mathbb{E}_{v_{c}}\Big[\int\limits_{\xbf_c}  w_x(\xbf, \vbf_{c})~d \xbf\Big]$ for ease of notation. From Equation \ref{eq:existence}, we infer that $a(\xbf_c) \leq \mu \leq b(\xbf_c)$. Therefore, we can use $a(\xbf_c)$ and $b(\xbf_c)$ as bounds within the Hoeffding bound for sampling $s$ i.i.d variables. This yields, for a target additive difference $t$, 
\[\Pr\big(|\bar{X} - \mu| \leq t) \leq 2\exp{\Bigg(\frac{-2st^2}{(b(\xbf_c)-a(\xbf_c))^2}\Bigg)}\] 
We now replace $t$ with $\epsilon_\text{Samp} \mu$ to obtain a multiplicative error bound, and upper-bound the failure probability by $\delta_\text{Samp}$ to compute a lower bound for sample complexity:
\begin{align*}
\begin{split}
s &\geq \ln(\frac{2}{\delta_\text{Samp}})\frac{1}{\epsilon_\text{Samp}^2}\frac{(b(\xbf_c) -a(\xbf_c))^2}{\mu^2}\\
&\geq \ln(\frac{2}{\delta_\text{Samp}})\frac{1}{\epsilon_\text{Samp}^2}\frac{(b(\xbf_c) - a(\xbf_c))^2}{a(\xbf_c)^2} \text{~~~as } a(\xbf_c) \leq \mu \leq b(\xbf_c) \\
&\geq \ln(\frac{2}{\delta_\text{Samp}})\frac{1}{\epsilon_\text{Samp}^2}\Bigg(\frac{b(\xbf_c)}{a(\xbf_c)}\Bigg)^2 \text{~~~~~~~~~~~~~~~~~~~~~~~~since } a(\xbf_c) > 0\\
&\geq \ln(\frac{2}{\delta_\text{Samp}})\frac{1}{\epsilon_\text{Samp}^2}\rho^2 \text{~~~~~~~~~~~~~~~~~~~~~~~~~~~~since } \rho = \max_{\xbf_c} \frac{b(\xbf_{c})}{a(\xbf_{c})}
\end{split}
\end{align*}

For \clauseweightdep to be an FPRAS, it must run in polynomial time with respect to $\frac{1}{\epsilon_\text{Samp}}$, $\frac{1}{\delta_\text{Samp}}$, $n$, $m$,  and  $k$. Since a sampling iteration runs in polynomial time (\volume is an FPRAS), then it is necessary and sufficient that $s$ be polynomial in $\frac{1}{\epsilon_\text{Samp}}$, $\frac{1}{\delta_\text{Samp}}$, $n$, $m$,  and  $k$. This is satisfied since $w$ is $\rho$-restricted.

We now show that, under the conditions of Lemma \ref{lem:voldep}, \clauseweightdep produces an $\epsilon, \delta$ approximation of $\WMI(c, w | \Xbf, \Vbf)$. In what follows, we assume that \clauseweightdep fails if Monte-Carlo sampling fails or \emph{any} of the \volume calls fail. Therefore, we first prove that, assuming no failure occurs (i.e. all oracles are reliable), the provided error bounds in Lemma \ref{lem:voldep} produce an estimate within the $\epsilon$ error requirement. Then, we prove that the confidence bounds asserted upper-bound the probability of any \clauseweightdep failure, denoted $f_v$, by $\delta$.
	
\paragraph{Error Bounds.} We first assume a successful run where both Monte-Carlo sampling and \volume produce values within their respective multiplicative error bounds. Setting $\epsilon_\text{Samp}$ and $\epsilon_\text{Comp}$ within the bounds of Lemma \ref{lem:voldep} yields the following bound on $\bar{X}$:
	\[\Big(1 - \frac{\epsilon}{1 + \sqrt{2}}\Big)^2\mu \leq \bar{X} \leq \Big(1 + \frac{\epsilon}{1 + \sqrt{2}}\Big)^2\mu\]
	which, $\forall 0 < \epsilon \leq 1$, implies that
\begin{align*}
\Big(1 - \epsilon\Big)\mu \leq \bar{X} \leq \Big(1 + \epsilon\Big)\mu,
\end{align*}
and, following multiplication by $\Big(\prod_{p \in {b_c}} w_b(p)\Big)$ on all sides, we obtain:
\begin{align*}
&\Big(1 - \epsilon\Big)Q \leq \bar{X} \Big(\prod_{p \in {b_c}} w_b(p)\Big) \leq \Big(1 + \epsilon\Big)Q,
\end{align*}
where $Q=\WMI(c, w \mid \Xbf, \Vbf) $, as required. 
	
\paragraph{Confidence Bounds.} \volume is called $s$ times within \clauseweightdep. Hence, we apply the union bound, to upper-bound $f_v$, and, using the bounds of Lemma \ref{lem:conf},  obtain:
	\begin{align*}
	\begin{split}
	\Pr(f_v) &\leq s\delta_\text{Comp} + \delta_\text{Samp}\\
	&= s\frac{\delta}{2s} + \frac{\delta}{2} = \frac{\delta}{2} = \delta\text{,}
	\end{split}
	\end{align*}
as required.

\end{document}